\documentclass{article}

\usepackage{array}
\usepackage{microtype}
\usepackage{graphicx}
\usepackage{subfigure}
\usepackage{booktabs} 

\usepackage{hyperref}
\usepackage{textcomp}

\usepackage[accepted]{icml2026}

\usepackage{amsmath}
\usepackage{amssymb}
\usepackage{mathtools}
\usepackage{amsthm}
\usepackage{booktabs}  
\usepackage{graphicx}  
\usepackage{amsmath}   
\usepackage{subcaption} 
\usepackage{multirow}  
\usepackage[capitalize,noabbrev]{cleveref}

\theoremstyle{plain}
\newtheorem{theorem}{Theorem}[section]

\newtheorem{corollary}[theorem]{Corollary}
\theoremstyle{definition}
\newtheorem{definition}[theorem]{Definition}
\newtheorem{assumption}[theorem]{Assumption}
\theoremstyle{remark}
\newtheorem{remark}[theorem]{Remark}

\usepackage[textsize=tiny]{todonotes}

\icmltitlerunning{Coherent Coordinate Descent with Implicit Landscape Smoothing for Lightweight Zeroth-Order Optimization}

\begin{document}

\twocolumn[
\icmltitle{Turning Stale Gradients into Stable Gradients: \\ Coherent Coordinate Descent with Implicit Landscape Smoothing for \\ Lightweight Zeroth-Order Optimization}







\icmlsetsymbol{equal}{*}

\begin{icmlauthorlist}
\icmlauthor{Chen Liang}{yale}
\icmlauthor{Xiatao Sun}{yale}
\icmlauthor{Qian Wang}{yale}
\icmlauthor{Daniel Rakita}{yale}
\end{icmlauthorlist}

\icmlaffiliation{yale}{Department of Computer Science, Yale University, New Haven, USA}

\icmlcorrespondingauthor{Chen Liang, Xiatao Sun, Qian Wang, and Daniel Rakita}{\{dylan.liang, xiatao.sun, peter.wang.qw262, daniel.rakita\}@yale.edu}

\icmlkeywords{Machine Learning, ICML}

\vskip 0.3in
]



\printAffiliationsAndNotice{} 

\begin{abstract}
Zeroth-Order (ZO) optimization is pivotal for scenarios where backpropagation is unavailable, such as memory-constrained on-device learning and black-box optimization. However, existing methods face a stark trade-off: they are either sample-inefficient (e.g., standard finite differences) or suffer from high variance due to randomized estimation (e.g., random subspace methods). In this work, we propose Coherent Coordinate Descent (CoCD), a deterministic, sample-efficient, and budget-aware ZO optimizer. Theoretically, we formalize the notion of gradient coherence and demonstrate that CoCD is equivalent to Block Cyclic Coordinate Descent (BCCD) with ``warm starts,'' effectively converting historical (stale) gradients from a liability into a computational asset. This mechanism enables $O(1)$ query complexity per step while maintaining global descent directions. Furthermore, we derive error bounds revealing a counter-intuitive insight: larger finite-difference step sizes can induce an implicit smoothing effect on the optimization landscape by reducing the effective smoothness constant, thereby improving convergence stability.  Experiments on MLP, CNN, and ResNet architectures (up to 270k parameters) demonstrate that CoCD significantly outperforms BCCD in terms of sample efficiency and convergence loss/accuracy, and exhibits superior stability over randomized ZO methods. Our results suggest that deterministic, structure-aware updates offer a superior alternative to randomization for lightweight ZO optimization.
\end{abstract}

\section{Introduction}
\label{sec:introduction}
\begin{figure}[h]
\centering
\includegraphics[width=\columnwidth,
  trim=30pt 100pt 0pt 70pt,
  clip]{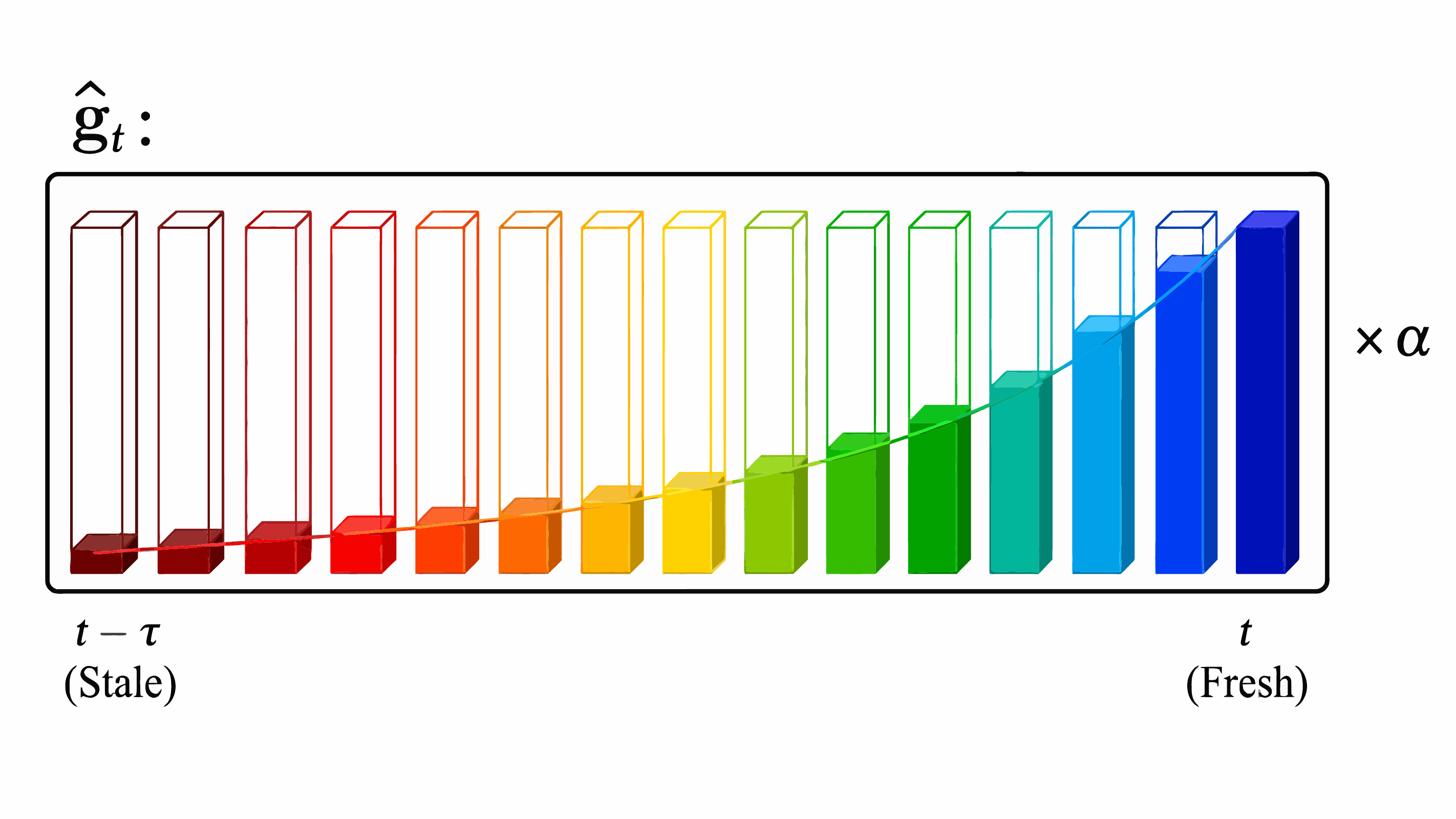} 
\caption{Conceptual illustration of CoCD. Each cuboid represents a gradient estimate projected onto a block of coordinates, from the stalest one evaluated at $t-\tau$ to the freshest at $t$. The fill levels encode how the momentum term ($\gamma$) in CoCD assigns exponentially higher weights to more recent gradient estimates. These decay-weighted gradient components are accumulated into $\hat{\mathbf{g}}_t$ and scaled by the learning rate $\alpha$ to form the update direction, illustrating how CoCD leverages gradient history for stable optimization.}
\label{fig:teaser}
\end{figure}
Zeroth-Order (ZO) optimization~\cite{liu2020primer, wang2018stochastic}, also known as derivative-free optimization~\cite{golovin2019gradientless}, serves as a cornerstone for machine learning applications where gradient information is either inaccessible or computationally prohibitive. Key scenarios include black-box adversarial attacks~\cite{ilyas2018black}, simulation-based reinforcement learning~\cite{kim2021survey}, and training on memory-constrained edge devices~\cite{lin2022device, cai2020tinytl} where constructing a computation graph for backpropagation is infeasible.

Despite its broad applicability, scaling ZO optimization to high-dimensional problems remains a fundamental challenge due to the sample efficiency--variance trade-off. Classical approaches relying on coordinate-wise finite differences (FD)~\cite{thomas2013numerical} provide low-variance gradient estimates but scale poorly, requiring $O(d)$ function evaluations per gradient step for $d$ parameters. Conversely, modern randomized approaches, such as Evolution Strategies (ES)~\cite{hansen2015evolution}, Simultaneous Perturbation Stochastic Approximation (SPSA)~\cite{maryak2001global}, or random subspace methods (e.g., DeepZero~\cite{chen2023deepzero}), reduce the per-step query cost but introduce significant variance. This stochasticity often necessitates small learning rates or large batch sizes to dampen noise, hindering the wall-clock convergence speed they aim to accelerate.

In this work, we propose a deterministic, cyclic-style coordinate optimization framework~\cite{canutescu2003cyclic} that challenges two prevailing assumptions in the zeroth-order optimization literature. The first assumption is that stale gradients, i.e., derivatives computed at previous iterations, are inherently harmful ``lags'' that should be discarded. We argue instead that optimization trajectories exhibit temporal coherence: gradients evolve continuously over time, with changes bounded by the function's Lipschitz smoothness. Under this perspective, stale gradients can be repurposed as cost-free approximations of the current geometry, effectively providing a ``warm start'' for the optimizer. The second assumption is that the finite difference interval ($\epsilon$) must be minimized to closely approximate the true gradient. Counterintuitively, we show that larger step sizes can be advantageous. Rather than merely degrading gradient accuracy, a larger $\epsilon$ can estimate the gradient of a smoothed function, implicitly filtering out high-frequency irregularities in the landscape and reducing the effective smoothness constant, thereby enabling more stable descent.

To operationalize these insights, we introduce \emph{Coherent Coordinate Descent} (CoCD). Algorithmically, CoCD cyclically traverses coordinates while maintaining a First-In-First-Out (FIFO) buffer of past gradient estimates. This structure provides explicit control over both the compute budget (active queries per iteration) and the memory budget (stored gradient history), while avoiding the high variance inherent to randomized zeroth-order estimators. 

Theoretically, we connect this seemingly heuristic strategy to established optimization theory by proving that CoCD is equivalent to Block Cyclic Coordinate Descent (BCCD)~\cite{beck2013convergence} equipped with stale-gradient warm starts. This equivalence enables a convergence analysis that explicitly quantifies the approximation error induced by reusing outdated information. Crucially, the resulting bound is governed by the smoothness of the objective. We show that larger finite difference intervals ($\epsilon$) can implicitly smooth the landscape by reducing the effective smoothness constant ($L_\epsilon$), thereby permitting larger step sizes and longer gradient histories without sacrificing stability. Together, these effects translate into faster practical convergence.

Our main contributions are summarized as follows:
\begin{itemize}
    \item \textbf{Algorithmic framework:} We introduce CoCD, a deterministic and memory-efficient zeroth-order optimizer that replaces randomized gradient estimation with structured cyclic updates stored in a FIFO buffer, achieving low query complexity without incurring high variance.
    \item \textbf{Theoretical grounding:} We prove that CoCD is equivalent to BCCD with warm starts and derive convergence guarantees that formally justify gradient reuse and the smoothing effect of larger finite difference steps. Under mild assumptions, CoCD achieves linear convergence in the worst case.
    \item \textbf{Empirical evaluation:} We evaluate CoCD on neural network training tasks, including MLPs for regression, as well as CNNs and ResNet-20~\cite{he2016deep} for classification, in the small-to-medium parameter regime (up to 270k). CoCD consistently outperforms BCCD in both sample efficiency and final accuracy, and exhibits superior stability over randomized zeroth-order methods such as SPSA.
\end{itemize}

We release an open-source implementation of CoCD to support reproducibility and further research\footnote{\href{https://github.com/chen-dylan-liang/CoCD}{https://github.com/chen-dylan-liang/CoCD}}.

\section{Related Works}
\label{sec:related_works}

This section situates CoCD within three closely related lines of research: zeroth-order optimization, coordinate descent methods, and optimization with stale gradients. 


\subsection{Zeroth-Order Optimization}
\label{subsec:zoo}
Zeroth-order (ZO) methods allow for optimization without explicit gradient computation. While early approaches relied on standard finite differences, recent works have focused on scalability via randomization. Evolution Strategies~\cite{hansen2015evolution} and Gaussian smoothing~\cite{gao2022generalizing} utilize Randomized Gradient Estimation (RGE). However, RGE suffers from high variance, requiring large batch sizes to reduce approximation error.

More recently, DeepZero~\cite{chen2023deepzero} scaled ZO to deep neural networks by combining random subspace optimization with cyclic coordinate updates. Empirical evidence and recent theoretical advances \cite{gurbuzbalaban2017cyclic, cai2023cyclic} suggest that Cyclic Gradient Estimation (CGE) can yield significantly better convergence behavior than RGE when navigating non-convex optimization landscapes. However, to accommodate million-scale models, DeepZero effectively compromises on this insight: they apply CGE only after restricting the search to a randomly selected subspace of parameters using ``pruning at initialization'' techniques~\cite{wang2020picking}. This approach introduces a structural bias: parameters outside the random subspace are effectively frozen during iterations, potentially missing critical descent directions. 

In contrast, CoCD targets memory-constrained regimes where such compromise is unnecessary. We implement true cyclic coordinate descent over the entire parameter space, fully exploiting the theoretical superiority of CGE without the blind spots introduced by random subspace pruning.

\subsection{Coordinate Descent}
\label{subsec:cd}
Coordinate descent (CD) methods~\cite{wright2015coordinate} iteratively update a subset of parameters while keeping others fixed. The theoretical analysis of CD has historically bifurcated into randomized (RCD) and cyclic (CCD) approaches. RCD is generally easier to analyze because the expected update direction aligns with the true gradient~\cite{lu2015complexity}. In contrast, analyzing CCD is significantly more challenging due to the deterministic nature of the updates, requiring nontrivial techniques to bound the error propagation across cycles.

Seminal works have established the non-asymptotic~\cite{saha2013nonasymptotic} and finite-time convergence~\cite{saha2010finite} properties of plain CCD where gradient estimates are updated one coordinate per iteration. Most recently, ~\citet{cai2023cyclic} provided a non-asymptotic convergence guarantee for BCCD working on non-convex optimization where multiple coordinates are updated per iteration. However, this rich body of literature operates almost exclusively within the \textit{first-order} context, assuming access to exact partial derivatives. In our work we extend these CCD-based optimization methods to the \textit{zeroth-order} setting, showing that the stability can be maintained even when gradients are approximated via finite differences and contain ``stale'' information.

\subsection{Optimization with Stale Gradients}
\label{subsec:stale_grad} 

In distributed and asynchronous optimization~\cite{dutta2018slow, zhou2022towards}, stale gradients typically arise as a byproduct of system latency and are treated as noise to be bounded. Diverging from this perspective, the recent work Web of Affine Spaces Optimization (WASP)~\cite{rakita2025coherence, liang2025robust} explicitly assumes temporal coherence in optimization trajectories and treats past gradients as informative geometric constraints rather than detrimental artifacts.

WASP operationalizes this idea through structured affine subspace constructions and associated matrix computations, making it well suited for robotics and control applications where objective functions are highly structured but typically involve only a few hundred variables. In contrast, our CoCD framework is designed for significantly lighter-weight operation. By relying almost exclusively on vector-level computations and simple gradient buffering, CoCD avoids expensive matrix factorizations and scales naturally to machine-learning–based objectives with many thousands of parameters. While conceptually inspired by WASP's view of gradient coherence, CoCD targets a distinct computational regime, enabling practical gradient reuse in large-scale zeroth-order optimization where heavier geometric constructions would be prohibitive.
\section{Preliminaries}
\label{sec:preliminaries}

In this section, we introduce the problem formulation, notation, and optimizer state representation used throughout the paper.

\subsection{Problem Formulation and Notation}
We consider the zeroth-order optimization problem of minimizing a differentiable objective function $f: \mathbb{R}^n \to \mathbb{R}$ where explicit gradient information is unavailable. The goal is to find
\[
\mathbf{x}^* = \arg\min_{\mathbf{x}} f(\mathbf{x})
\]
relying solely on function evaluations. We denote the standard basis vectors of $\mathbb{R}^n$ as $\{\mathbf{e}_1, \dots, \mathbf{e}_n\}$, where $\mathbf{e}_i$ has a 1 in the $i$-th coordinate and 0 elsewhere.

We denote the sequence of decision variables generated by the optimizer as $\mathbf{x}_1, \mathbf{x}_2, \dots, \mathbf{x}_t$, and the corresponding function values as $f_t \triangleq f(\mathbf{x}_t)$. We use $\mathbf{g}_t \triangleq \nabla f(\mathbf{x}_t)$ to denote the true gradient at step $t$. Since $\mathbf{g}_t$ is inaccessible, the optimizer maintains an estimate of the gradient, denoted as $\hat{\mathbf{g}}$, which is used directly as the descent direction.

\subsection{Optimizer State and Cyclic Dynamics}
\label{subsec:optimizer_state}
A key innovation of our framework is the decoupling of the \textit{optimization step} index $t$ from the \textit{gradient estimation state}. To formalize this, we define the \textit{optimizer state} at any given moment as the tuple
\begin{equation}
    \mathcal{S}^{(k)} = \left( \hat{\mathbf{g}}^{(k)}, i^{(k)} \right),
\end{equation}
where:
\begin{itemize}
    \item $\hat{\mathbf{g}}^{(k)} \in \mathbb{R}^n$ represents the current global gradient estimate maintained by the optimizer.
    \item $i^{(k)} \in \{1, \dots, n\}$ denotes the \textit{active coordinate index} scheduled for the next finite difference query.
    \item The superscript $(k)$ indexes internal state updates and is distinct from the outer optimization iteration counter $t$.
\end{itemize}

For a single optimization step at $\mathbf{x}_t$, the optimizer may perform a fixed number of function queries to refine its gradient estimate. We refer to this number as the \textit{compute budget}, denoted by $B$. Consequently, for each position $\mathbf{x}_t$, the optimizer state undergoes $B$ internal updates.

The state transition follows a deterministic, cyclic structure. The active coordinate index updates according to
\begin{equation}
    i^{(k+1)} = (i^{(k)} \bmod n) + 1.
\end{equation}
Simultaneously, the gradient estimate $\hat{\mathbf{g}}^{(k)}$ is updated by replacing the entry corresponding to the active coordinate with the most recent finite difference information. Let $\tilde{\nabla}_{i} f(\mathbf{x})$ denote the coordinate-wise finite difference approximation (e.g., central difference):
\begin{equation}
\label{eq:cfd}
    \tilde{\nabla}_{i} f(\mathbf{x}) \triangleq 
    \frac{f(\mathbf{x} + \epsilon \mathbf{e}_i) - f(\mathbf{x}- \epsilon \mathbf{e}_i)}{2\epsilon},
\end{equation}
where $\epsilon > 0$ is the finite difference step size.

The update rule for the gradient estimate is then given by
\begin{equation}
    \label{eq:state_update}
    \hat{\mathbf{g}}^{(k+1)} =
    \hat{\mathbf{g}}^{(k)} +
    \left(
        \tilde{\nabla}_{i^{(k)}} f(\mathbf{x}_t)
        - \mathbf{e}_{i^{(k)}}^\top \hat{\mathbf{g}}^{(k)}
    \right)\mathbf{e}_{i^{(k)}}.
\end{equation}

Intuitively, Eq.~\eqref{eq:state_update} replaces the stale gradient estimate stored at coordinate $i^{(k)}$ with the newly computed finite difference value, ensuring that $\hat{\mathbf{g}}$ always contains the most recent information available for each coordinate.

\section{Methodology: Coherent Coordinate Descent}
\label{sec:methodology}

In this section, we present Coherent Coordinate Descent (CoCD), a deterministic zeroth-order optimization framework designed to bridge the gap between heuristic efficiency and theoretical rigor.

\subsection{The CoCD Algorithm and Unifying Momentum}
\label{subsec:algorithm_core}

At the core of CoCD is a dense gradient buffer $\hat{\mathbf{g}} \in \mathbb{R}^n$, which maintains an estimate of the global gradient. Unlike randomized methods that construct a new estimate from scratch at every step, CoCD performs incremental updates.

\textbf{The Update Cycle.} 
Let $\hat{\mathbf{g}}_t$ denote the gradient buffer used at optimization step $t$. In each step, given a compute budget of $B$ function queries, CoCD updates a subset of coordinates. The process consists of two phases: a \textit{temporal decay} phase and a \textit{refresh} phase.

First, we apply a global momentum decay factor $\gamma \in [0, 1]$ to the entire buffer:
\begin{equation}
    \hat{\mathbf{g}}_{t-1}^{\text{decay}} = \gamma \cdot \hat{\mathbf{g}}_{t-1}.
\end{equation}
Next, we sequentially update $B$ coordinates. For each sub-step $k=1, \dots, B$, we select the coordinate index $i$ according to the cyclic schedule defined in Sec.~\ref{subsec:optimizer_state}, compute the fresh finite difference $\tilde{\nabla}_{i} f(\mathbf{x}_t)$, and overwrite the corresponding entry in the buffer:
\begin{equation}
    \label{eq:buffer_update}
    \hat{\mathbf{g}}_{t}[i] = \tilde{\nabla}_{i} f(\mathbf{x}_t).
\end{equation}
For all other coordinates $j \neq i$ that were not selected in this batch, the buffer retains its decayed value:
\[
\hat{\mathbf{g}}_{t}[j] = \hat{\mathbf{g}}_{t-1}^{\text{decay}}[j].
\]
Finally, the model parameters are updated using this hybrid gradient estimate:
\begin{equation}
    \mathbf{x}_{t+1} = \mathbf{x}_t - \alpha \cdot \hat{\mathbf{g}}_{t},
\end{equation}
where $\alpha$ is the learning rate.

\textbf{Unification of CCD and CoCD.} 
The parameter $\gamma$ serves a dual purpose: it acts as a stabilizer for the gradient history in practice and, theoretically, provides a unifying view of coordinate-wise optimization methods.
\begin{itemize}
    \item \textbf{Case $\gamma = 1.0$ (Standard CoCD):} The buffer acts as a perfect memory. Stale gradient estimates are preserved indefinitely until they are cyclically refreshed. This maximizes information utilization, effectively performing full-gradient descent with a time-lagged gradient estimate.
    \item \textbf{Case $\gamma = 0.0$ (Classical BCCD):} The buffer is effectively cleared at the start of each step. The gradient estimate $\hat{\mathbf{g}}$ contains non-zero entries \textit{only} for the $B$ coordinates updated in the current step. This recovers the standard BCCD behavior, where no stale information is used.
    \item \textbf{Case $0 < \gamma < 1$ (Fading Memory):} This setting provides a smooth interpolation. It allows the algorithm to leverage historical information while exponentially suppressing very old (and potentially invalid) gradient estimates, offering robustness in highly non-convex or rapidly changing landscapes.
\end{itemize}

\subsection{Efficient Implementation: The Flattened FIFO Buffer}
\label{subsec:implementation}

Implementing CoCD for training neural networks presents a unique engineering challenge: the parameters $\mathbf{x}$ are typically structured as a collection of tensors with varying shapes (e.g., 4D convolution kernels, 2D weight matrices), whereas the coordinate descent logic operates naturally on a flat vector space. Naively reshaping tensors at every step would incur prohibitive memory copy overhead.

To address this, we implement a \textbf{virtualized flattening mechanism} with $O(1)$ overhead. For the detailed algorithmic implementation, we refer readers to the pseudocode in Appendix~\ref{app:pseudocode}.
\begin{enumerate}
    \item \textbf{Global Gradient Buffer:} We allocate a single contiguous 1D tensor (the FIFO buffer) of size $m$, where $m$ is the user-defined memory budget. Typically $m = n$ (total parameter count), but it can be smaller for memory-constrained devices.
    \item \textbf{State Pointers:} We maintain a set of integer pointers---`cur\_param\_idx', `cur\_weight\_idx', and `cur\_grad\_idx'---that track the mapping between the flat buffer indices and the structured model parameters. 
    \item \textbf{In-Place Operations:} During the \textit{optimization} phase (descent), instead of reconstructing the gradient tensors, we create temporary views of the flat buffer that match the shapes of the target parameters. This allows us to apply the update $\mathbf{x} \leftarrow \mathbf{x} - \alpha \hat{\mathbf{g}}$ completely in-place, without allocating additional memory for gradients.
\end{enumerate}
This design ensures that CoCD maintains a strict memory footprint of exactly one copy of the model parameters (for the gradient buffer), which is significantly more memory-efficient than adaptive first-order methods like Adam (which require two copies for moments) or random subspace methods that often require storing large projection matrices.

\section{Theoretical Analysis}
\label{sec:theory}

In this section, we provide a rigorous analysis of CoCD. We analyze the behaviors standard of CoCD (with momentum $\gamma=1$ and memory budget $m=n$) in \S~\ref{subsec:error_analysis} and \S~\ref{subsec:convergence}.We first formalize the notion of \emph{local coherence} and \emph{implicit smoothing}, then derive upper bounds on the gradient approximation error induced by staleness in \S~\ref{subsec:error_analysis}. Building on this result, we analyze CoCD's global convergence under certain assumptions in \S~\ref{subsec:convergence}. Finally, in \S~\ref{subsec:stability_analysis}, we take varying $\gamma$ and $m$ into account and examine their impact on CoCD's stability.

\subsection{Approximation Error with Implicit Smoothing}
\label{subsec:error_analysis}

We begin by analyzing the quality of the gradient estimate $\hat{\mathbf{g}}_t$ produced by CoCD relative to the true gradient. To isolate the error introduced specifically by \emph{staleness}, we use the fact

\begin{equation}
 \label{eq:error_separation}   
\begin{aligned}
     \left\|\hat{\mathbf{g}}_t - \nabla f(\mathbf{x}_t)\right\| &\leq  \underbrace{\left\| \hat{\mathbf{g}}_t - \tilde{\nabla}^{\epsilon} f(\mathbf{x}_t)\right\|}_{\text{staleness error}} \\
     &+ \underbrace{\left\| \tilde{\nabla}^{\epsilon} f(\mathbf{x}_t)- \nabla f(\mathbf{x}_t)\right\|}_{\text{finite difference error}}
\end{aligned}
\end{equation}

where $\tilde{\nabla}^{\epsilon} f$ denote the gradient calculated using finite difference with an interval $\epsilon$. We thus only analyze the first term for the two reasons: 1). Bounding the error introduced by finite difference is beyond the scope of this work, and we refer readers to~\citet{berahas2022theoretical} for detailed analyses of the error. 2). Since the staleness error is accumulative over the optimization trajectory while the finite difference error is ``point-wise'', the first term dominates the total error.


\textbf{Defining Coherence.} 
The central premise of our method is that gradients do not change arbitrarily between successive optimization steps. We formalize this assumption through local smoothness of the optimization trajectory.

\begin{definition}[Local Coherence]
\label{def:local_coherence}
    Let $\mathcal{T}_{t,k} = \{\mathbf{x}_{t-k}, \dots, \mathbf{x}_t\}$ denote the sequence of the most recent $k+1$ iterates. We say the optimization trajectory is \textit{locally coherent} over $\mathcal{T}_{t,k}$ if the objective function $f$ is $L$-smooth within the convex hull of $\mathcal{T}_{t,k}$. We also define the associated local step bound
    \[
    \delta_{t,k} \geq \max_{j\in \{t-k+1, \cdots, t\}} \|\mathbf{x}_{j} - \mathbf{x}_{j-1}\|.
    \]
    When the window is clear from context, we write $\delta$.
\end{definition}



\textbf{Defining Implicit Smoothing.} Finite-difference gradients implicitly smooth the optimization landscape by
replacing each partial derivative with a local coordinate-wise average. For example, central finite-difference approximation satisfies
\[
    \tilde{\nabla}^{\epsilon}_i f(\mathbf{x})
    =
    \frac{1}{2\epsilon}
    \int_{-\epsilon}^{\epsilon}
    \nabla_i f(\mathbf{x}+u\mathbf{e}_i)\,du ,
\]
by the Fundamental Theorem of Calculus,where \(\mathbf{e}_i\) is the \(i\)-th standard basis vector. Thus,
\(\epsilon\) determines the averaging scale: larger values of \(\epsilon\)
average gradient information over a wider coordinate-wise neighborhood and can
attenuate small-scale oscillations in highly nonconvex landscapes. We define the uniform coordinate-wise smoothness constant $L_{\epsilon
}$ as follows to formalize the notion of implicit smoothing.

\begin{definition}[Uniform Coordinate-wise Smoothness with Implicit Smoothing]
Let $f$ be an $L$-smooth function. We define the uniform coordinate-wise Lipschitz constant $L_{\epsilon}$ of the finite-difference gradient $\tilde{\nabla}^{\epsilon} f(\mathbf{x})$ as
\[
    L_\epsilon
    :=
    \max_{i \in [n]} \sup_{\mathbf{x}\neq \mathbf{y}}
    \frac{
        |\tilde{\nabla}^{\epsilon}_i f(\mathbf{x})-\tilde{\nabla}^{\epsilon}_i f(\mathbf{y})|
    }{
        \|\mathbf{x}-\mathbf{y}\|
    } .
\]
\end{definition}
This definition corresponds to the \(2\to\infty\) induced norm of the Hessian,
whereas standard smoothness corresponds to the \(2\to 2\) induced norm, i.e.,
the spectral norm. It serves to construct the bounds below that treat blocks of coordinates separately. 
\begin{theorem}[Approximation Error Bound]
\label{thm:error_bound}
    Suppose $f: \mathbb{R}^{n} \to \mathbb{R}$ is locally coherent with the smoothness parameter $L$. At step $t$, let $\hat{\mathbf{g}}_t$ denote the gradient estimate maintained by CoCD with a compute budget of $B$ queries per step and a finite difference interval $\epsilon$. Let $K = \lfloor \frac{n}{B} \rfloor$ and $r = n \bmod B$. Then the approximation error satisfies:
    \begin{equation}
    \begin{aligned}
        \left\|\hat{\mathbf{g}}_t - \tilde{\nabla}^{\epsilon} f(\mathbf{x}_t)\right\| &\leq \frac{L_{\epsilon} \delta}{2} \left( 
        B K (K - 1) + 2r K 
        \right) \\
        &\leq \frac{L \delta}{2} \left( 
        B K (K - 1) + 2r K  \right).
          \end{aligned}
    \end{equation}
\end{theorem}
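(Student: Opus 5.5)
We argue coordinate by coordinate, under the standing setting $\gamma=1$, $m=n$, and after at least one full cycle so that every buffer entry holds a genuine finite-difference value. Each coordinate $i$ of $\hat{\mathbf{g}}_t$ equals $\tilde{\nabla}^{\epsilon}_i f(\mathbf{x}_{t-s_i})$, where $s_i\ge 0$ is its \emph{age}: the number of optimization steps since coordinate $i$ was last refreshed. By the definition of $L_\epsilon$ and the triangle inequality along the trajectory,
\[
|\hat{\mathbf{g}}_t[i]-\tilde{\nabla}^{\epsilon}_i f(\mathbf{x}_t)|
\le L_\epsilon \|\mathbf{x}_t-\mathbf{x}_{t-s_i}\|
\le L_\epsilon \sum_{j=t-s_i+1}^{t}\|\mathbf{x}_j-\mathbf{x}_{j-1}\|
\le L_\epsilon\, s_i\,\delta ,
\]
using the local step bound $\delta$ from Definition~\ref{def:local_coherence} over the window of the last $\max_i s_i$ steps. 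The coordinate-wise $2\to\infty$ constant is exactly what makes this per-coordinate inequality available.

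Next we pass to the vector norm. We sum the coordinate errors using $\|\mathbf{v}\|\le\|\mathbf{v}\|_1$, which gives
\[
\|\hat{\mathbf{g}}_t-\tilde{\nabla}^{\epsilon} f(\mathbf{x}_t)\|\le L_\epsilon\,\delta\sum_{i=1}^n s_i .
\]
This is crude but matches the linear-in-$B$ form of the target. The whole problem then reduces to bounding the total age $\sum_i s_i$ under the cyclic schedule.

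The combinatorial count is where the main care is needed. In the idealized case $r=0$, the $n$ coordinates split into $K$ blocks of size $B$ refreshed at steps $t,t-1,\dots,t-K+1$, with ages $0,1,\dots,K-1$. The sum is then $B\sum_{a=0}^{K-1}a=BK(K-1)/2$, which matches the first term. For $r>0$ the cycle length is $n/B$, which is not an integer, so block boundaries drift from cycle to cycle. We must show that the worst-case configuration adds at most $rK$: the $r$ leftover coordinates have age at most $K$, contributing $\le rK$. We would make this precise by listing the last $n$ queries in reverse order. Query number $q$ (counting from $0$ for the most recent) was made at step $t-\lfloor q/B\rfloor$, so its age is $\lfloor q/B\rfloor$. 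The most recent $n$ queries cover each coordinate exactly once, because the schedule is cyclic. Hence
\[
\sum_i s_i=\sum_{q=0}^{n-1}\lfloor q/B\rfloor = B\,\frac{K(K-1)}{2}+rK .
\]
Multiplying by $L_\epsilon\delta$ gives the first inequality, $\frac{L_\epsilon\delta}{2}\bigl(BK(K-1)+2rK\bigr)$. One subtlety is that the queries within one step are all evaluated at the same $\mathbf{x}_{t-a}$, so within-step ordering does not matter; the formula above respects this automatically.

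Finally, the second inequality needs $L_\epsilon\le L$. By the implicit-smoothing integral identity,
\[
\tilde{\nabla}^{\epsilon}_i f(\mathbf{x})-\tilde{\nabla}^{\epsilon}_i f(\mathbf{y})=\frac{1}{2\epsilon}\int_{-\epsilon}^{\epsilon}\bigl(\nabla_i f(\mathbf{x}+u\mathbf{e}_i)-\nabla_i f(\mathbf{y}+u\mathbf{e}_i)\bigr)\,du .
\]
Each integrand is bounded by $\|\nabla f(\mathbf{x}+u\mathbf{e}_i)-\nabla f(\mathbf{y}+u\mathbf{e}_i)\|\le L\|\mathbf{x}-\mathbf{y}\|$, so averaging gives $L_\epsilon\le L$. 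The main obstacle is that $L$-smoothness is only assumed locally on the convex hull of the trajectory window, while the shifted points $\mathbf{x}+u\mathbf{e}_i$ leave it. We would handle this by assuming, or noting as needed, that local coherence holds on an $\epsilon$-neighborhood of that hull.
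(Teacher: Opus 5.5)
Your proposal is correct and follows the paper's proof essentially step for step. You get $L_\epsilon\le L$ from the integral identity, bound each coordinate of age $s_i$ by $L_\epsilon s_i\delta$, control the $\ell_2$ norm by the $\ell_1$ sum, and count $B$ coordinates at each lag $0,\dots,K-1$ plus $r$ coordinates at lag $K$.

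Two points are refinements rather than departures. Your reverse enumeration of the last $n$ queries makes rigorous the lag partition that the paper simply asserts. Your caveat is also fair: the paper tacitly uses global $L$-smoothness to get $L_\epsilon\le L$, while local coherence only gives smoothness on the trajectory hull.
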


\begin{proof}
    See Appendix~\ref{app:proof_error_bound}.
\end{proof}

\textbf{Interpretation.} 
The error bound is governed by the product $L_{\epsilon} \cdot \delta$. This formalizes a key insight: approximation quality degrades only when the objective is highly non-smooth (large $L_{\epsilon}$) or when the optimizer takes excessively large steps (large $\delta$). An appropriately large $\epsilon$ can reduce $L_\epsilon$, and thus improve the smoothness of the optimization landscape. Consequently, accurate gradient estimates can be maintained even in the presence of stale information. An empirical verification of the bound is given in Appendix~\ref{app:emp_verification}.

We highlight three corollaries that characterize the behavior of CoCD under different compute budgets:

\begin{corollary}[Worst-Case Performance ($B=1$)]
    With a minimal budget of $B=1$, the approximation error scales as $O(n^2 L_{\epsilon} \delta)$. To maintain the $O(1/n)$ accuracy typically required for stable descent, the step size must scale such that $\delta = O(n^{-3})$.
\end{corollary}

\begin{corollary}[Efficient Regime ($B=\sqrt{n}$)]
    With a budget of $B=\sqrt{n}$, the error bound improves to $O(n^{1.5} L_{\epsilon} \delta)$. This permits more aggressive step sizes ($\delta = O(n^{-2})$) while maintaining approximation fidelity on the order of $O(1/\sqrt{n})$.
\end{corollary}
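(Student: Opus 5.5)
The corollary is a specialization of Theorem~\ref{thm:error_bound}, so the plan is to substitute $B=\sqrt{n}$ and then read off the scaling. For simplicity I assume $\sqrt{n}$ is an integer and $B$ divides $n$, so that $K = \lfloor n/B \rfloor = \sqrt{n}$ and $r = n \bmod B = 0$. If $\sqrt{n}$ is not an integer, I take $B=\lceil\sqrt{n}\rceil$. Then $K\le\sqrt{n}$ and $r<B\le\sqrt{n}+1$, and the same order estimates go through with slightly worse constants.

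\textbf{Step 1 (substitution).} Plugging $B=K=\sqrt{n}$ and $r=0$ into the bound of Theorem~\ref{thm:error_bound} gives
\[
\left\|\hat{\mathbf{g}}_t - \tilde{\nabla}^{\epsilon} f(\mathbf{x}_t)\right\| \le \frac{L_\epsilon \delta}{2}\,\sqrt{n}\cdot\sqrt{n}\,(\sqrt{n}-1) \le \frac{L_\epsilon\delta}{2}\, n^{3/2}.
\]
This is $O(n^{1.5}L_\epsilon\delta)$, as claimed. In the general case, $BK(K-1)\le B K^2\le (\sqrt n+1)\,n$ and $2rK\le 2(\sqrt n+1)\sqrt n$, which is $O(n^{1.5})$ as well.

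\textbf{Step 2 (step-size scaling).} Next I impose a target fidelity of $O(1/\sqrt{n})$ and solve $n^{1.5}L_\epsilon\delta = O(n^{-1/2})$. Treating $L_\epsilon$ as a constant independent of $n$ (and bounded by $L$ via the second inequality of Theorem~\ref{thm:error_bound}), this gives $\delta = O(n^{-2})$.

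\textbf{Step 3 (comparison).} To show the regime is indeed more permissive than the worst case, I compare with the $B=1$ corollary, where $\delta=O(n^{-3})$ was needed for $O(1/n)$ accuracy. The only real subtlety is interpretive rather than technical. The $O(1/\sqrt{n})$ accuracy target is a chosen normalization and is not derived, so I state it as an assumption and treat $L_\epsilon$ as dimension-free. The arithmetic itself is routine.
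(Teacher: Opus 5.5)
Your proposal is correct and follows the same route the paper implicitly takes: substitute $B=K=\sqrt{n}$, $r=0$ into Theorem~\ref{thm:error_bound} to get $\frac{L_\epsilon\delta}{2}(n^{3/2}-n)=O(n^{1.5}L_\epsilon\delta)$, then solve $n^{1.5}L_\epsilon\delta=O(n^{-1/2})$ with $L_\epsilon$ treated as dimension-free to obtain $\delta=O(n^{-2})$. Your treatment of non-integer $\sqrt{n}$ via $B=\lceil\sqrt{n}\rceil$ is a correct, harmless extra that the paper does not include.
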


\begin{corollary}[Fallback to Finite-Differenced Gradient ($B=n$)]
    \label{col:fallback}
    When the budget covers all coordinates ($B=n$), we have $\lfloor n/B \rfloor = 1$, and the right-hand side of Theorem~\ref{thm:error_bound} becomes zero. In this case, CoCD recovers the exact finite difference gradient, confirming that it generalizes standard full-gradient methods.
\end{corollary}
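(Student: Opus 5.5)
The plan is to specialize Theorem~\ref{thm:error_bound} to $B=n$, and then to confirm the conclusion directly from the update rule. Confirming it directly guards against any slack in the theorem's bookkeeping, and it also justifies the word ``exact'' in the statement rather than just ``bounded by zero''.

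\emph{Step 1 (plug into the theorem).} With $B=n$ we get $K=\lfloor n/n\rfloor = 1$ and $r = n \bmod n = 0$. The bracket on the right-hand side of Theorem~\ref{thm:error_bound} then becomes
\[
B K(K-1) + 2rK = n\cdot 1\cdot 0 + 2\cdot 0\cdot 1 = 0,
\]
so both upper bounds, the one with $L_\epsilon\delta$ and the one with $L\delta$, vanish. This gives $\|\hat{\mathbf{g}}_t - \tilde{\nabla}^{\epsilon} f(\mathbf{x}_t)\| \le 0$, i.e.\ $\hat{\mathbf{g}}_t = \tilde{\nabla}^{\epsilon} f(\mathbf{x}_t)$. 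The local coherence hypothesis is needed only so that the theorem applies; since the factor $L_\epsilon\delta$ is multiplied by zero, its size is irrelevant.

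\emph{Step 2 (direct check).} From the cyclic schedule $i^{(k+1)} = (i^{(k)} \bmod n)+1$, any $n$ consecutive internal updates visit each index in $\{1,\dots,n\}$ exactly once, whatever the starting index $i^{(k)}$. During step $t$, all $B=n$ sub-steps evaluate the finite difference at the same point $\mathbf{x}_t$ and overwrite the corresponding buffer entry via Eq.~\eqref{eq:buffer_update}. Consequently every coordinate of $\hat{\mathbf{g}}_t$ equals $\tilde{\nabla}^{\epsilon}_i f(\mathbf{x}_t)$, and no entry inherited from $\hat{\mathbf{g}}_{t-1}$ survives. This holds for any decay factor $\gamma$, since every decayed entry is overwritten. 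It follows that the update $\mathbf{x}_{t+1}=\mathbf{x}_t-\alpha\hat{\mathbf{g}}_t$ is exactly gradient descent with the full central-difference gradient, which is the ``generalizes standard full-gradient methods'' claim.

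The only point needing care is Step 2's claim that $n$ cyclic sub-steps cover each coordinate exactly once, with no repeats. This holds because the map $i\mapsto (i \bmod n)+1$ is a single $n$-cycle on $\{1,\dots,n\}$. Beyond that, I expect no real obstacle: the corollary is a direct arithmetic consequence of the theorem.
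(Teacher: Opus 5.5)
Your Step 1 is exactly the paper's argument: the paper gives no separate proof beyond substituting $K=1$, $r=0$ into Theorem~\ref{thm:error_bound} so that the bracket $BK(K-1)+2rK$ vanishes, so your proposal is correct and follows the same route. Your Step 2 is a sound extra check that the paper omits, and it is the cleaner justification of ``exact'' and of the full-gradient-descent claim, because it uses only the cyclic schedule rather than the theorem's lag bookkeeping or any smoothness hypothesis, and it holds for every $\gamma$ under the standing memory budget $m=n$.
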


\subsection{Global Convergence Analysis}
\label{subsec:convergence}

We now establish the \emph{theoretical ceiling} of CoCD by demonstrating that under the exact same structural assumptions where gradient descent achieves linear convergence, stale gradients also properly achieve linear convergence. 

\begin{assumption}[$L$-Smoothness]
    \label{asp:smooth}
    The gradient of $f$ is Lipschitz continuous, i.e.,
    \[
    \|\nabla f(\mathbf{x}) - \nabla f(\mathbf{y})\| \leq L \|\mathbf{x} - \mathbf{y}\|
    \quad \text{for all } \mathbf{x}, \mathbf{y}.
    \]
\end{assumption}

\begin{assumption}[Polyak-\L{}ojasiewicz (P\L{}) Condition]
    \label{asp:pl}
    There exists $\mu > 0$ such that
    \[
    \|\nabla f(\mathbf{x})\|^2 \geq 2\mu \bigl(f(\mathbf{x}) - f(\mathbf{x}^*)\bigr),
    \]
    where $\mathbf{x}^*$ is a global minimizer.
\end{assumption}

\begin{assumption}[Sufficiently Small Finite Difference Interval]
\label{asp:acc_fd}
    We assume the finite difference interval $\epsilon \to 0$ for approximating the gradients of function $f$.
\end{assumption}
Under these assumptions, we establish a linear convergence rate for CoCD, comparable to that of first-order methods.

\begin{theorem}[Linear Convergence of CoCD]
\label{thm:convergence}
    Let Assumptions~\ref{asp:smooth},~\ref{asp:pl}, and~\ref{asp:acc_fd} hold, and let $\alpha$ denote the learning rate. Define the staleness factor $\tau = n/B - 1$. Then, after $t$ iterations,
    \begin{equation}
        f(\mathbf{x}_t) - f(\mathbf{x}^*) 
        \leq 
        \left(1 - \frac{2 \mu C_1}{C_2}\right)^t 
        \bigl(f(\mathbf{x}_0) - f(\mathbf{x}^*)\bigr),
    \end{equation}
    provided the constants
    \[
    C_1 = \frac{1}{\alpha} - \frac{L}{2}(1 + n\tau),
    \qquad
    C_2 = \frac{1}{\alpha^2} + \frac{Ln\tau}{\alpha}+\frac{L^2 n^2 \tau^2}{4}
    \]
    are positive.
\end{theorem}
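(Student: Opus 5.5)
Under Assumption~\ref{asp:acc_fd} the finite-difference error term in \eqref{eq:error_separation} vanishes and $L_\epsilon \le L$. The plan is to write the CoCD iterate as a perturbed gradient step, $\mathbf{x}_{t+1} = \mathbf{x}_t - \alpha(\mathbf{g}_t + \mathbf{e}_t)$ with $\mathbf{e}_t = \hat{\mathbf{g}}_t - \nabla f(\mathbf{x}_t)$. We then control $\|\mathbf{e}_t\|$ by the step length $\|\mathbf{x}_{t+1}-\mathbf{x}_t\| = \alpha\|\hat{\mathbf{g}}_t\|$ rather than by $\|\mathbf{g}_t\|$. This matches the shape of $C_1$ and $C_2$, which are exactly what one obtains after dividing a descent inequality in $\hat{\mathbf{g}}_t$ by $\alpha^2$.

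\textbf{Step 1 (staleness bound).} Each coordinate of $\hat{\mathbf{g}}_t$ was evaluated at some $\mathbf{x}_{t-s}$ with $s \le \tau$, where $\tau = n/B-1$. Summing over coordinates with $L$-smoothness and the local step bound $\delta$ gives the crude estimate
\[
\|\mathbf{e}_t\| \le \frac{L}{2}\, n\tau\, \delta .
\]
This is a coarsened form of Theorem~\ref{thm:error_bound}, since $BK(K-1)+2rK \le n\tau$ up to the factor convention. We take $\delta$ to be the current step $\alpha\|\hat{\mathbf{g}}_t\|$; this identification (a step-size stationarity or monotonicity assumption within the window) is the one loose point and will need to be stated. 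With it, $\|\mathbf{e}_t\| \le \tfrac{L n\tau\alpha}{2}\|\hat{\mathbf{g}}_t\|$.

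\textbf{Step 2 (descent).} The descent lemma gives
\[
f(\mathbf{x}_{t+1}) \le f(\mathbf{x}_t) - \alpha\langle \mathbf{g}_t,\hat{\mathbf{g}}_t\rangle + \tfrac{L\alpha^2}{2}\|\hat{\mathbf{g}}_t\|^2 .
\]
Write $\langle \mathbf{g}_t,\hat{\mathbf{g}}_t\rangle = \|\hat{\mathbf{g}}_t\|^2 - \langle \mathbf{e}_t,\hat{\mathbf{g}}_t\rangle \ge \|\hat{\mathbf{g}}_t\|^2(1 - \tfrac{Ln\tau\alpha}{2})$. This yields
\[
f(\mathbf{x}_{t+1}) \le f(\mathbf{x}_t) - \alpha^2 C_1 \|\hat{\mathbf{g}}_t\|^2,
\]
since $\alpha - \tfrac{L\alpha^2}{2}(1+n\tau) = \alpha^2 C_1$.

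\textbf{Step 3 (relating $\|\hat{\mathbf{g}}_t\|$ to $\|\mathbf{g}_t\|$).} By the triangle inequality, $\|\mathbf{g}_t\| \le \|\hat{\mathbf{g}}_t\| + \|\mathbf{e}_t\| \le (1+\tfrac{Ln\tau\alpha}{2})\|\hat{\mathbf{g}}_t\|$. Squaring gives
\[
\|\mathbf{g}_t\|^2 \le \alpha^2 C_2 \|\hat{\mathbf{g}}_t\|^2,
\]
because $(\tfrac1\alpha + \tfrac{Ln\tau}{2})^2 = C_2$. Hence $\|\hat{\mathbf{g}}_t\|^2 \ge \|\mathbf{g}_t\|^2/(\alpha^2 C_2)$. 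Substituting into Step 2 and applying the PL condition (Assumption~\ref{asp:pl}) gives
\[
f(\mathbf{x}_{t+1})-f^* \le \Bigl(1-\tfrac{2\mu C_1}{C_2}\Bigr)(f(\mathbf{x}_t)-f^*).
\]
The positivity of $C_1$ and $C_2$ is used exactly here. Unrolling over $t$ steps gives the claim.

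\textbf{Main obstacle.} Step 1 is the hard step: the bound from Theorem~\ref{thm:error_bound} involves the maximum of past steps, not the current step. I would first try to justify $\delta \le \alpha\|\hat{\mathbf{g}}_t\|$ via monotone decrease of the step norms, which would follow inductively from Step 2 under the same constants. Failing that, I would state it as a standing assumption implicit in Definition~\ref{def:local_coherence}.
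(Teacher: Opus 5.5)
Your Steps 2--3 are the paper's argument with $\alpha\|\hat{\mathbf{g}}_t\|$ in place of the paper's $\delta$. The paper derives $f(\mathbf{x}_t)-f(\mathbf{x}_{t-1})\le -C_1\delta^2$ and $\|\nabla f(\mathbf{x}_{t-1})\|^2\le C_2\delta^2$ and then applies P\L{}. Your algebra, including $C_2=(\frac{1}{\alpha}+\frac{Ln\tau}{2})^2$, is right. The gap is the one you flagged in Step 1, and neither of your proposed exits works.

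The staleness bound controls $\|\mathbf{e}_t\|$ through the \emph{past} steps $\|\mathbf{x}_j-\mathbf{x}_{j-1}\|$ with $t-\tau<j\le t$. So you need the current step $\alpha\|\hat{\mathbf{g}}_t\|$ to dominate each of them. Monotonically decreasing step norms give $\delta\ge\alpha\|\hat{\mathbf{g}}_t\|$, which is the opposite inequality.

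Nor can this be a standing assumption. For $\tau\ge1$ the window contains $j=t$, so imposing the condition at every $t$ makes the step norms nondecreasing. Meanwhile your own Step 2, together with $f\ge f(\mathbf{x}^*)$, gives $\sum_t\alpha^2\|\hat{\mathbf{g}}_t\|^2\le (f(\mathbf{x}_0)-f(\mathbf{x}^*))/C_1<\infty$. A nondecreasing sequence with summable squares is identically zero, so every step vanishes. By Step 3 and P\L{}, that happens only when $\mathbf{x}_0$ is already a minimizer, so the theorem becomes vacuous.

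The paper's proof does not close this gap; it hides it. It takes $\delta=\delta_{t,t}$, the largest step so far, and then replaces $\|\mathbf{x}_t-\mathbf{x}_{t-1}\|^2$ by $\delta^2$ in the term $\bigl(-\frac{1}{\alpha}+\frac{L}{2}\bigr)\|\mathbf{x}_t-\mathbf{x}_{t-1}\|^2$. Since $C_1>0$ forces $-\frac{1}{\alpha}+\frac{L}{2}<0$, that inequality points the wrong way unless the current step attains the maximum. So the paper tacitly makes a stronger form of your identification.

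What your Steps 2--3 really use is the relative-error hypothesis $\|\hat{\mathbf{g}}_t-\nabla f(\mathbf{x}_t)\|\le\frac{Ln\tau\alpha}{2}\|\hat{\mathbf{g}}_t\|$. This is a standard inexact-gradient condition, is not self-defeating, and under it your proof is complete. However, it does not follow from Theorem~\ref{thm:error_bound}. A proof from smoothness and P\L{} alone has to keep past steps in the potential, e.g.\ $f(\mathbf{x}_t)-f(\mathbf{x}^*)+\sum_{j=1}^{\tau}\beta_j\|\mathbf{x}_{t-j+1}-\mathbf{x}_{t-j}\|^2$, as in bounded-delay analyses; the resulting linear rate must be re-derived.

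Step 1 also fails during the first cycle, because the buffer starts at zero. Take $f=\frac12\|\mathbf{x}\|^2$ on $\mathbb{R}^2$, $B=1$, coordinate $1$ refreshed first, and $\mathbf{x}_0=(0,1)$. Then $\hat{\mathbf{g}}_0=\mathbf{0}$ and $\mathbf{x}_1=\mathbf{x}_0$, which violates the claimed contraction at $t=1$. Finally, $BK(K-1)+2rK\le n\tau$ is false for $0<r<B$, since $n\tau=BK(K-1)+2rK-r(1-r/B)$. You need $B\mid n$, as the paper assumes.
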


\begin{proof}
    See Appendix~\ref{app:proof_convergence}.
\end{proof}
In practice, convergence can be much faster than what the theorem suggests. We refer readers to Remark~\ref{rmk:C1} in Appendix~\ref{app:proof_convergence} for detailed explanations.
\begin{corollary}[Equivalence to Gradient Descent]
    In the full-budget case ($B=n$), we have $\tau=0$, and the rate reduces to $1 - 2\mu\alpha(1-\frac{L\alpha}{2})$, recovering the P\L{}-type gradient descent rate.
\end{corollary}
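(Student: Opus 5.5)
The plan is to obtain this corollary by direct specialization of Theorem~\ref{thm:convergence}, and then to cross-check the resulting rate against the classical P\L{} analysis of gradient descent. We will not redo the convergence argument.

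\textbf{Step 1: specialize the constants.} Setting $B=n$ in the definition of the staleness factor gives $\tau = n/n - 1 = 0$. Every term in $C_1$ and $C_2$ carrying a factor $n\tau$ or $n^2\tau^2$ then vanishes, leaving
\[
C_1 = \frac{1}{\alpha} - \frac{L}{2}, \qquad C_2 = \frac{1}{\alpha^2}.
\]
The positivity hypotheses of Theorem~\ref{thm:convergence} reduce to $C_2>0$, which holds automatically, and $C_1>0$, which is equivalent to $0<\alpha<2/L$. This is the usual step-size condition for gradient descent on an $L$-smooth function.

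\textbf{Step 2: simplify the contraction factor.} Substituting into the rate gives
\[
\frac{2\mu C_1}{C_2} = 2\mu\alpha^2\Bigl(\frac{1}{\alpha}-\frac{L}{2}\Bigr) = 2\mu\alpha\Bigl(1-\frac{L\alpha}{2}\Bigr).
\]
Hence $f(\mathbf{x}_t)-f(\mathbf{x}^*) \le \bigl(1-2\mu\alpha(1-\tfrac{L\alpha}{2})\bigr)^t\bigl(f(\mathbf{x}_0)-f(\mathbf{x}^*)\bigr)$, which is the claimed rate.

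\textbf{Step 3: confirm that this is exactly gradient descent.} By Corollary~\ref{col:fallback}, with $B=n$ every coordinate of the buffer is refreshed at $\mathbf{x}_t$ in each step, so $\hat{\mathbf{g}}_t = \tilde{\nabla}^{\epsilon} f(\mathbf{x}_t)$. Under Assumption~\ref{asp:acc_fd} this equals $\nabla f(\mathbf{x}_t)$, so the update is plain gradient descent, $\mathbf{x}_{t+1} = \mathbf{x}_t - \alpha\nabla f(\mathbf{x}_t)$.

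As an independent check, the descent lemma from Assumption~\ref{asp:smooth} gives
\[
f(\mathbf{x}_{t+1}) \le f(\mathbf{x}_t) - \alpha\Bigl(1-\frac{L\alpha}{2}\Bigr)\|\nabla f(\mathbf{x}_t)\|^2 .
\]
Combining this with Assumption~\ref{asp:pl} and subtracting $f(\mathbf{x}^*)$ yields the same one-step contraction. This confirms that the specialized rate is the standard P\L{} gradient descent rate and is not an artifact of the general bound.

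There is no real obstacle here. The only point needing care is that the positivity requirement on $C_1$ becomes $\alpha<2/L$. For $\alpha\le 1/\mu$, this step-size range also gives a contraction factor in $[0,1)$, since $\mu\le L$ under P\L{} and smoothness.
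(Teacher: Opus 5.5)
Your proposal is correct and matches the paper, which obtains this corollary simply by substituting $\tau=0$ into Theorem~\ref{thm:convergence}, so that $C_1=\frac{1}{\alpha}-\frac{L}{2}$, $C_2=\frac{1}{\alpha^2}$, and $2\mu C_1/C_2=2\mu\alpha(1-\frac{L\alpha}{2})$. Your descent-lemma cross-check is a worthwhile extra, since it derives the $\tau=0$ rate from the exact one-step inequality and so does not rely on the general theorem's proof (which replaces $\|\mathbf{x}_t-\mathbf{x}_{t-1}\|^2$ by the window maximum $\delta^2$ inside a term with negative coefficient); as a minor aside, your qualifier $\alpha\le 1/\mu$ is unnecessary, because $1-2\mu\alpha+\mu L\alpha^2\ge 1-\frac{\mu}{L}\ge 0$ for every $\alpha$.
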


Next, we establish a weaker convergence guarantee even \text{without} the P\L{} condition.

\begin{theorem}[Gradient Estimate Convergence]
\label{thm:grad_convergence}
    Under Assumption~\ref{asp:smooth}, for a bounded function $f$, the sequence of gradient estimates satisfies
    \[
    \|\hat{\mathbf{g}}_{t} - \hat{\mathbf{g}}_{t-1}\| \to 0
    \quad \text{as } t \to \infty.
    \]
\end{theorem}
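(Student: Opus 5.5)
Since $f$ is bounded below and $L$-smooth, the plan is a sufficient-decrease/telescoping argument. Because $\mathbf{x}_{t+1}-\mathbf{x}_t=-\alpha\hat{\mathbf{g}}_t$, it is enough to show that $\sum_t \|\hat{\mathbf{g}}_t\|^2<\infty$, or at least that $\|\mathbf{x}_{t+1}-\mathbf{x}_t\|\to 0$, and then to bound $\hat{\mathbf{g}}_t-\hat{\mathbf{g}}_{t-1}$ in terms of recent displacements. I work in the standard setting of the section ($\gamma=1$, $m=n$, and finite differences treated as exact, as in Assumption~\ref{asp:acc_fd}), with the same positivity condition on $C_1$ as in Theorem~\ref{thm:convergence}.

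\textbf{Step 1 (descent inequality with stale error).} The descent lemma gives
\[
f(\mathbf{x}_{t+1})\le f(\mathbf{x}_t)-\alpha\langle\nabla f(\mathbf{x}_t),\hat{\mathbf{g}}_t\rangle+\tfrac{L\alpha^2}{2}\|\hat{\mathbf{g}}_t\|^2 .
\]
Write $\nabla f(\mathbf{x}_t)=\hat{\mathbf{g}}_t+\mathbf{e}_t$. By Theorem~\ref{thm:error_bound}, in its summed form, the staleness error satisfies $\|\mathbf{e}_t\|\le \tfrac{L}{2}n\tau\,\max_{t-\tau\le j<t}\|\mathbf{x}_{j+1}-\mathbf{x}_j\| = \tfrac{L\alpha n\tau}{2}\max_{j}\|\hat{\mathbf{g}}_j\|$ over the window of the last $\tau$ steps, where each coordinate was last refreshed at most $\tau$ steps ago. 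Applying Cauchy--Schwarz and Young's inequality, and replacing the max over the window by a sum of squares, I aim for a bound of the form
\[
f(\mathbf{x}_{t+1})\le f(\mathbf{x}_t)-\alpha^2 C_1\|\hat{\mathbf{g}}_t\|^2+c\sum_{j=t-\tau}^{t-1}\|\hat{\mathbf{g}}_j\|^2 .
\]

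\textbf{Step 2 (telescoping; main obstacle).} I sum this bound over $t=0,\dots,T$. Each $\|\hat{\mathbf{g}}_j\|^2$ appears on the right at most $\tau$ times, so
\[
\bigl(\alpha^2C_1-c\tau\bigr)\sum_t\|\hat{\mathbf{g}}_t\|^2\le f(\mathbf{x}_0)-\inf f<\infty .
\]
Boundedness of $f$ is used exactly here. The main obstacle is making the constants cooperate, namely ensuring $\alpha^2C_1-c\tau>0$. I would first try to absorb $c\tau$ by choosing the Young parameter so that the condition collapses to $C_1>0$, mirroring the constants of Theorem~\ref{thm:convergence}. If that fails, I would add a Lyapunov term $\Phi_t=f(\mathbf{x}_t)+\beta\sum_{j=t-\tau}^{t-1}(j-t+\tau+1)\|\hat{\mathbf{g}}_j\|^2$, which is the standard device for delayed-gradient analyses, and show that $\Phi$ is nonincreasing. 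Either route yields $\|\hat{\mathbf{g}}_t\|\to0$, and hence $\|\mathbf{x}_{t+1}-\mathbf{x}_t\|\to0$.

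\textbf{Step 3 (conclude).} Between steps $t-1$ and $t$, exactly the $B$ refreshed coordinates change, and each changes from $\nabla_i f(\mathbf{x}_{s})$ to $\nabla_i f(\mathbf{x}_t)$ for some $s\ge t-\tau-1$. By $L$-smoothness,
\[
\|\hat{\mathbf{g}}_t-\hat{\mathbf{g}}_{t-1}\|\le \sqrt{B}\,L\sum_{j=t-\tau-1}^{t-1}\|\mathbf{x}_{j+1}-\mathbf{x}_j\| .
\]
This is a fixed finite sum of terms that tend to zero, so it tends to zero. Alternatively, the triangle inequality $\|\hat{\mathbf{g}}_t-\hat{\mathbf{g}}_{t-1}\|\le\|\hat{\mathbf{g}}_t\|+\|\hat{\mathbf{g}}_{t-1}\|$ together with Step 2 gives the conclusion directly.
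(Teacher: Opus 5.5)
Your proof is correct, and its core differs from the paper's. The paper's ending matches your Step 3: it gets $\|\hat{\mathbf{g}}_t-\hat{\mathbf{g}}_{t-1}\|\le BL\|\mathbf{x}_t-\mathbf{x}_{t-\tau-1}\|\le nL\delta$, with $BL$ where you have $\sqrt{B}L$. However, it obtains $\delta\to0$ by importing a per-step sufficient decrease $f(\mathbf{x}_{k-1})-f(\mathbf{x}_k)\ge C_1\|\mathbf{x}_k-\mathbf{x}_{k-1}\|^2$ from Eq.~\eqref{eq:converge_1.1} and summing it. That inequality does not actually follow, for two reasons. First, in \eqref{eq:converge_1.1} the step $\|\mathbf{x}_t-\mathbf{x}_{t-1}\|$ is replaced by the larger window maximum $\delta$ inside $\left(-\frac{1}{\alpha}+\frac{L}{2}\right)\|\mathbf{x}_t-\mathbf{x}_{t-1}\|^2$, whose coefficient is negative, so the substitution goes the wrong way. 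Second, and more fundamentally, the error $\mathbf{c}_{t-1}$ is controlled by the previous $\tau$ displacements, not the current one. What does hold is an amortized decrease. Your Steps 1--2 produce exactly this: you trade the cross term against earlier squared steps and absorb them after telescoping. This also buys a stronger conclusion, $\sum_t\|\hat{\mathbf{g}}_t\|^2<\infty$. Hence $\hat{\mathbf{g}}_t\to0$, then $\|\mathbf{e}_t\|\to0$, and so $\nabla f(\mathbf{x}_t)\to0$.

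One correction to your constant bookkeeping. With the window-max form of the error, bounding the max by a sum of squares loses up to a factor $\tau$. The best Young weight then only yields $\alpha<2/\bigl(L(1+n\tau^{3/2})\bigr)$, not $C_1>0$. To land exactly on $C_1>0$ without a Lyapunov term, keep the weighted sum $\|\mathbf{e}_t\|\le BL\alpha\sum_{k=1}^{\tau}(\tau+1-k)\|\hat{\mathbf{g}}_{t-k}\|$ and apply $ab\le\frac{1}{2}(a^2+b^2)$ termwise. The weights sum to $\tau(\tau+1)/2$ and $B(\tau+1)=n$, so after telescoping the net coefficient of $\sum_t\|\hat{\mathbf{g}}_t\|^2$ is exactly $\alpha-\frac{L\alpha^2}{2}(1+n\tau)=\alpha^2C_1$. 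Start the sum after the first cycle, since the buffer is initialized to zero and the staleness structure only holds from then on.
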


\begin{proof}
    See Appendix~\ref{app:proof_convergence}.
\end{proof}

This result shows that the maintained gradient estimate stabilizes asymptotically.

\textbf{Impact of Implicit Smoothing on Convergence.}
Adapting our proofs to the framework proposed in~\citet{karimi2020linearconvergencegradientproximalgradient} for analyzing proximal gradient descent with the error bounds of finite-differenced gradients obtained in~\citet{berahas2022theoretical}, we can show linear convergence and gradient estimate convergence still hold when $\epsilon>0$. For linear convergence, we will have:
   \begin{equation}
        f(\mathbf{x}_t) - f(\mathbf{x}^*) 
        \leq 
        \left(1 - \frac{2 \mu C_1}{C_2}\right)^t 
        \bigl(f(\mathbf{x}_0) - f(\mathbf{x}^*)\bigr)+err(\epsilon),
    \end{equation}
    where
    \[
    C_1 \approx \frac{1}{\alpha} - \frac{L_{\epsilon}}{2}(1 + n\tau),
    \qquad
    C_2 \approx \frac{1}{\alpha^2} + \frac{L_{\epsilon}n\tau}{\alpha}+\frac{L_{\epsilon}^2 n^2 \tau^2}{4}
    ,\]
    and $err(\epsilon)$ is a polynomial of $\epsilon$ accounting for the bias caused by finite difference. 
    
    An appropriately larger $\epsilon$ can reduce $L_{\epsilon}$, achieving better convergence rate, though it could potentially increase $err(\epsilon)$, pushing the result away from the true global optima. Implicit smoothing with smaller $L_\epsilon$ also enables the stability condition $C_1>0$ to be satisfied for larger learning rate $\alpha$. Our evaluations in \S~\ref{subsec:main_result} corroborates with the analyses.

\subsection{Stability Analysis}
\label{subsec:stability_analysis}
We now analyze how varying compute budgets, momentum, and memory budgets influence stability. 

According to Theorem~\ref{thm:convergence}, we need $C_1>0$ to ensure stable descent. Thus the learning rate must satisfy $\alpha < \frac{2}{L(1+n\tau)}$. This highlights a trade-off: larger problem dimensions ($n$) or increased staleness ($\tau$) require smaller step sizes for stability. Increasing the compute budget $B$ reduces $\tau$, enabling larger learning rates and faster convergence.

Applying fading memory $0<\gamma <1$ for approximating gradients is equivalent to applying geometrically decaying learning rates $\{\alpha, \alpha \gamma,...,\alpha\gamma^{\tau} \}$ to $\tau+1$ blocks of coordinates from fresh to stale during descent. Momentum thus
exponentially suppresses the weight of older gradients, resulting in stabler descent. Memory budget $m$ acts equivalently as a hard truncation to the decaying learning rates, setting zero learning rates for all blocks with staleness larger than $\frac{m}{B}$. We denote the effective learning rate to be $\alpha_{\text{eff}}(\gamma, m) \ll \alpha$. We then need $\alpha_{\text{eff}}(\gamma, m) < \frac{2}{L(1+n\tau)}$ for stable training. The actual learning rate can be much larger than the bound.

\section{Evaluation}
\label{sec:experiments}

In this section, we evaluate the performance of CoCD on three benchmark datasets: SARCOS~\cite{vijayakumar2000locally} (regression), MNIST~\cite{lecun2002gradient}, and CIFAR-10~\cite{krizhevsky2009learning} (classification). We compare CoCD against standard Block Cyclic Coordinate Descent (BCCD, i.e., $\gamma=0$) and Stochastic Gradient Descent (SGD). We further conduct ablation studies to analyze the impact of the smoothing radius $\epsilon$, momentum coefficient $\gamma$, compute budget $B$, and memory budget $M$. Finally, we compare CoCD against randomized zeroth-order baselines under matched compute budgets.

In addition, we demonstrate in Appendix~\ref{sec:scale_up} that CoCD can scale up stably to Resnet-20 (with $\approx 270 \text{k}$ paramters) training with sufficiently large compute budget in the given time limit.

All experiments were implemented in PyTorch~\cite{paszke2019pytorch} and conducted on a workstation equipped with a single NVIDIA RTX PRO 6000 Blackwell GPU.

\subsection{Main Results: Convergence and Accuracy}
\label{subsec:main_result}
We first report the training performance of CoCD relative to the baselines.  
(1) \textbf{Models:} For SARCOS, we use a 5-layer MLP with approximately 13k parameters. For MNIST and CIFAR-10, we employ a CNN with three convolutional layers followed by one fully connected layer, totaling approximately 20k parameters.  
(2) \textbf{Baselines:} We compare against BCCD ($\gamma=0$) to isolate the effect of gradient reuse via momentum. For BCCD, we use a small smoothing radius $\epsilon=10^{-6}$, which empirically yields the strongest performance for standard coordinate descent. See Figure~\ref{fig:main_result_plot} on how larger smoothing radii improve the performance of CoCD while degrade that of BCCD. SGD is included solely as a first-order oracle reference, using exact gradients and momentum fixed at $0.9$.  
(3) \textbf{Training settings:} All methods are trained for 50 epochs. We use a batch size of 64 for regression and 128 for classification. To ensure a fair comparison, the learning rate $\alpha$ and weight decay are aligned across CoCD, BCCD, and SGD. Specifically, we use a weight decay of $10^{-4}$ for SARCOS and MNIST, and $5 \times 10^{-4}$ for CIFAR-10. The CoCD-specific hyperparameters are summarized in Table~\ref{tab:hyperparams}. The quantitative results are summarized in Table~\ref{tab:main_results}.

\begin{table}[h]
    \centering
    \caption{Hyperparameter settings for CoCD. Notation: learning rate ($\alpha$), momentum ($\gamma$), smoothing radius ($\epsilon$), and compute budget ($B$). The budget is reported as the raw number of queries followed by the percentage of total trainable parameters in parentheses. For BCCD, we use $\epsilon=10^{-6}$.}
    \label{tab:hyperparams}
    \resizebox{\columnwidth}{!}{%
    \begin{tabular}{lcccc}
        \toprule
        \textbf{Dataset} & \textbf{LR ($\alpha$)} & \textbf{Momentum ($\gamma$)} & \textbf{Smoothing ($\epsilon$)} & \textbf{Budget ($B$)} \\
        \midrule
        SARCOS   & 0.001 & 1.0  & 1.0 & 64 ($\approx$0.5\%) \\
        MNIST    & 0.01  & 0.99 & 0.1 & 256 ($\approx$1.3\%) \\
        CIFAR-10 & 0.01  & 0.99 & 0.1 & 1024 ($\approx$4.2\%) \\
        \bottomrule
    \end{tabular}%
    }
\end{table}

\begin{table}[h]
    \centering
    \caption{Final loss (SARCOS) and accuracy (MNIST, CIFAR-10). SGD uses exact gradients and serves as an oracle reference. Training time is reported in seconds per epoch.}
    \label{tab:main_results}
    \resizebox{\columnwidth}{!}{%
    \begin{tabular}{llcc}
        \toprule
        \textbf{Dataset} & \textbf{Method} & \textbf{Final Metric} & \textbf{Time (s/epoch)} \\
        \midrule
        \multirow{3}{*}{\textbf{SARCOS}} 
          & SGD (Oracle)     & \textit{Loss:} 5.38 & -- \\
          & BCCD ($\gamma=0$)    & 188.73 & 6.23 \\
          & \textbf{CoCD (Ours)} & \textbf{31.18} & 6.12 \\
        \midrule
        \multirow{3}{*}{\textbf{MNIST}} 
          & SGD (Oracle)     & \textit{Acc:} 99.21\% & -- \\
          & BCCD ($\gamma=0$)    & 27.03\%     & $\approx$ 44.42 \\
          & \textbf{CoCD (Ours)} & \textbf{95.48\%} & $\approx$ 44.56 \\
        \midrule
        \multirow{3}{*}{\textbf{CIFAR-10}} 
          & SGD (Oracle)     & \textit{Acc:} 62.51\% & -- \\
          & BCCD ($\gamma=0$)    & 10.13\%     & $\approx$ 77.01 \\
          & \textbf{CoCD (Ours)} & \textbf{45.08\%} & $\approx$ 77.03 \\
        \bottomrule
    \end{tabular}%
    }
\end{table}

\begin{figure}[t]
  \centering
  \includegraphics[width=\columnwidth]{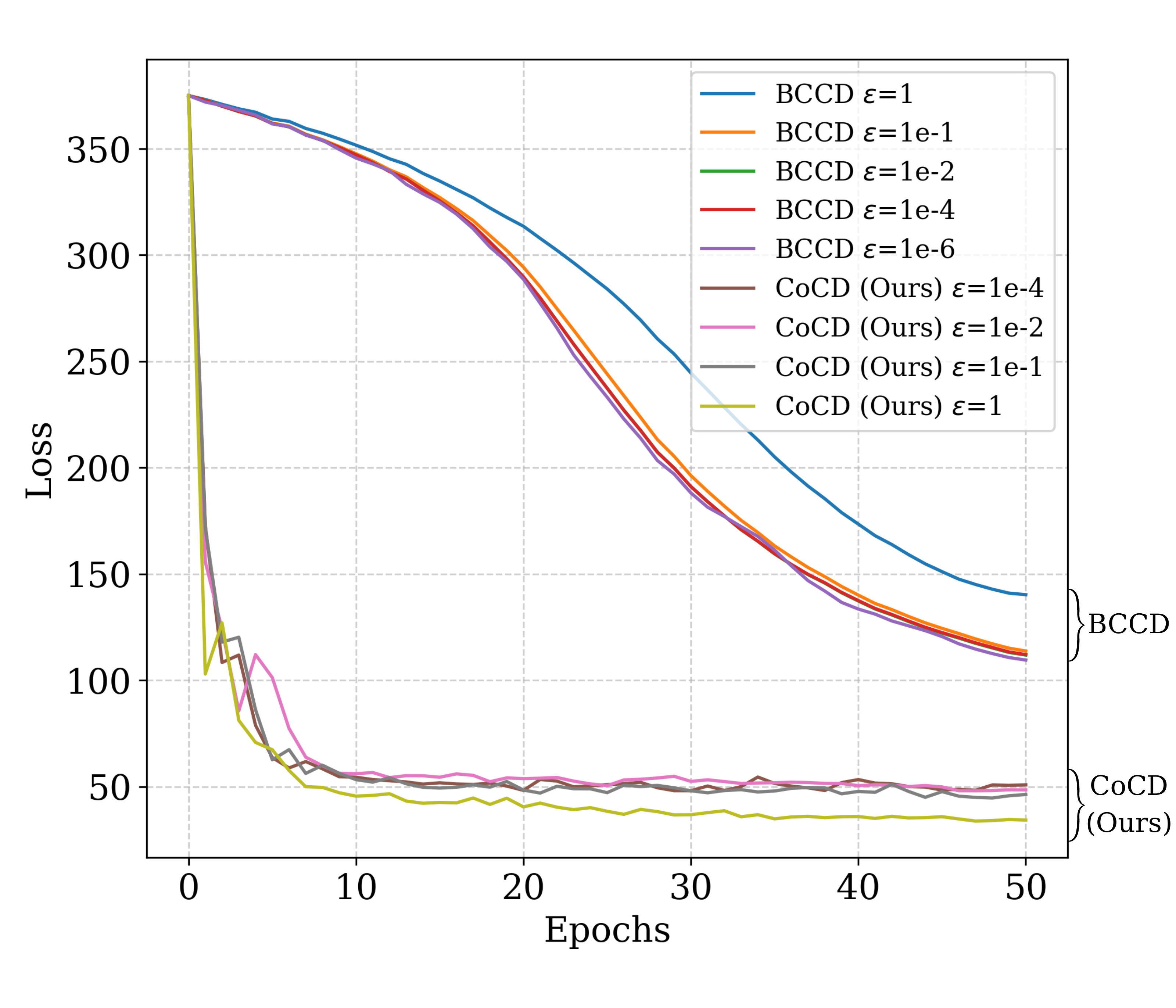} 
  \caption{\textbf{CoCD accelerates derivative-free training on the SARCOS robotics dataset.} 
  We train a 5-layer MLP ($\approx$ 12k parameters) using our proposed Coherent Coordinate Descent (CoCD) versus standard Block Cyclic Coordinate Descent (BCCD) with a fixed query budget ($B\approx 2\%$ of all parameters). 
  The results highlight two key advantages: 
  (1) \textbf{Performance Gap:} CoCD (lower cluster) dramatically outperforms BCCD (upper cluster) in validation loss. 
  (2) \textbf{The Smoothing Anomaly:} While large smoothing radii ($\epsilon=1$) degrade BCCD performance (blue line, top), they act as a stabilizer for CoCD (yellow line, bottom), allowing it to achieve the lowest final loss. This indicates CoCD's unique ability to leverage landscape smoothing for robust optimization.}
  \label{fig:main_result_plot}
\end{figure}

\textbf{Analysis.} As shown in Table~\ref{tab:main_results}, CoCD consistently outperforms BCCD across all tasks while substantially closing the gap to the first-order oracle. These gains are achieved with negligible additional computational overhead, confirming that the momentum-based gradient reuse is effectively free. The effect is particularly pronounced on CIFAR-10: while BCCD fails to escape the random-guessing regime, CoCD recovers meaningful representations and reaches 45.08\% accuracy, indicating that coherence-based updates provide crucial robustness in high-dimensional, non-convex settings.

\subsection{Ablation Studies}
\label{sec:ablation}

We next study the influence of key hyperparameters. Experiments focus on two representative settings: an MLP on SARCOS and a CNN on MNIST. Unless otherwise stated, we use the following defaults: (1) \textbf{SARCOS:} $\epsilon=1.0$, $\gamma=1.0$, $B=64$, $M=\frac{m}{n}=1.0$; (2) \textbf{MNIST:} $\epsilon=1.0$, $\gamma=1.0$, $B=256$, $M=\frac{m}{n}=1.0$. The results can be seen in Figure~\ref{fig:ablation_grid}.

\begin{figure*}[t]
    \centering
    \setlength{\tabcolsep}{1pt}
    \begin{tabular}{ m{1em} m{0.24\textwidth} m{0.24\textwidth} m{0.24\textwidth} m{0.24\textwidth} }
         & \centering \textbf{Effect of $\epsilon$} & \centering \textbf{Effect of $\gamma$} & \centering \textbf{Effect of $B$} & \centering \textbf{Effect of $M$} \tabularnewline
        
        \rotatebox{90}{\textbf{\small MLP, Regression}} & 
        \includegraphics[width=0.24\textwidth]{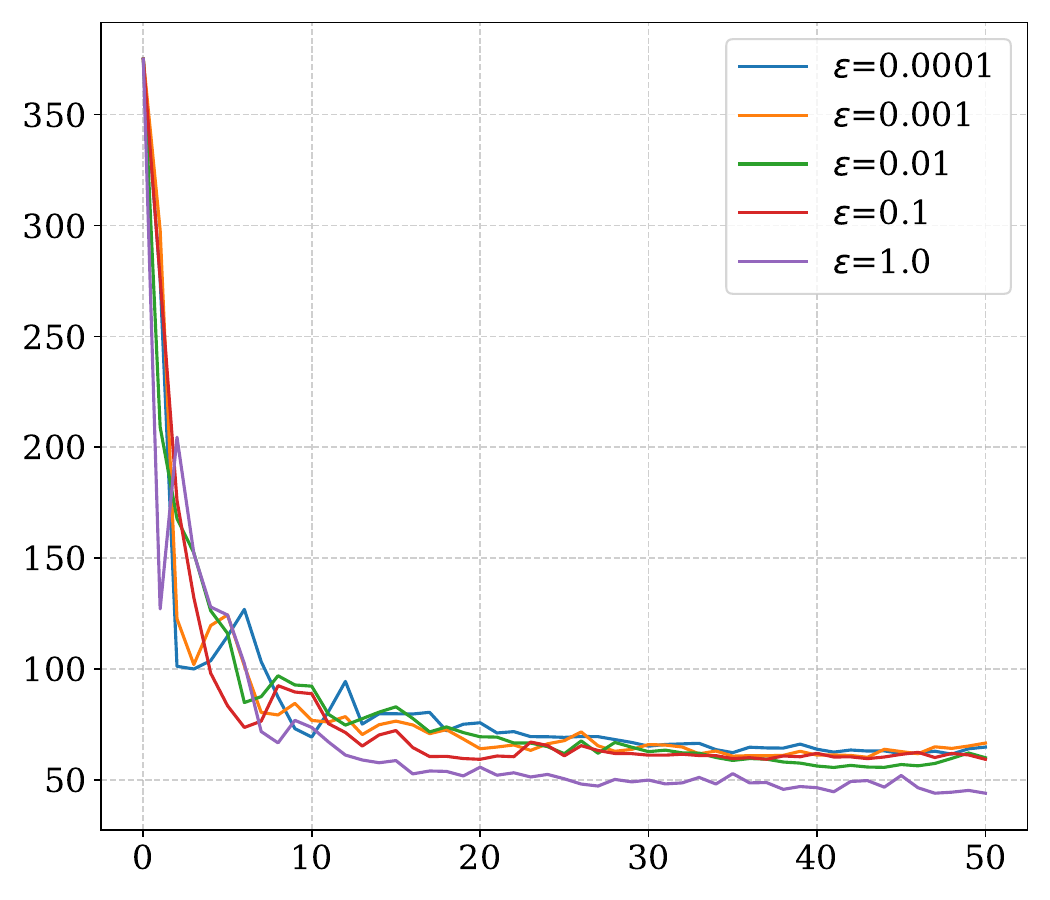} & 
        \includegraphics[width=0.24\textwidth]{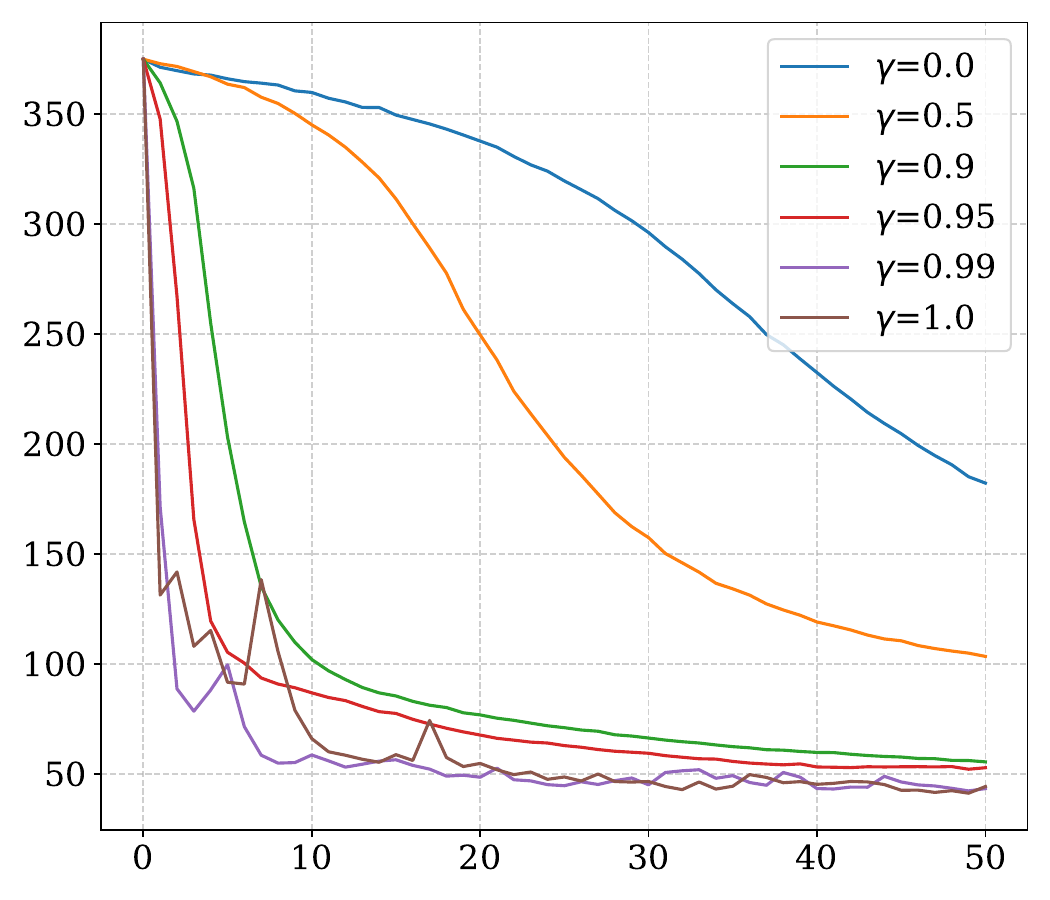} & 
        \includegraphics[width=0.24\textwidth]{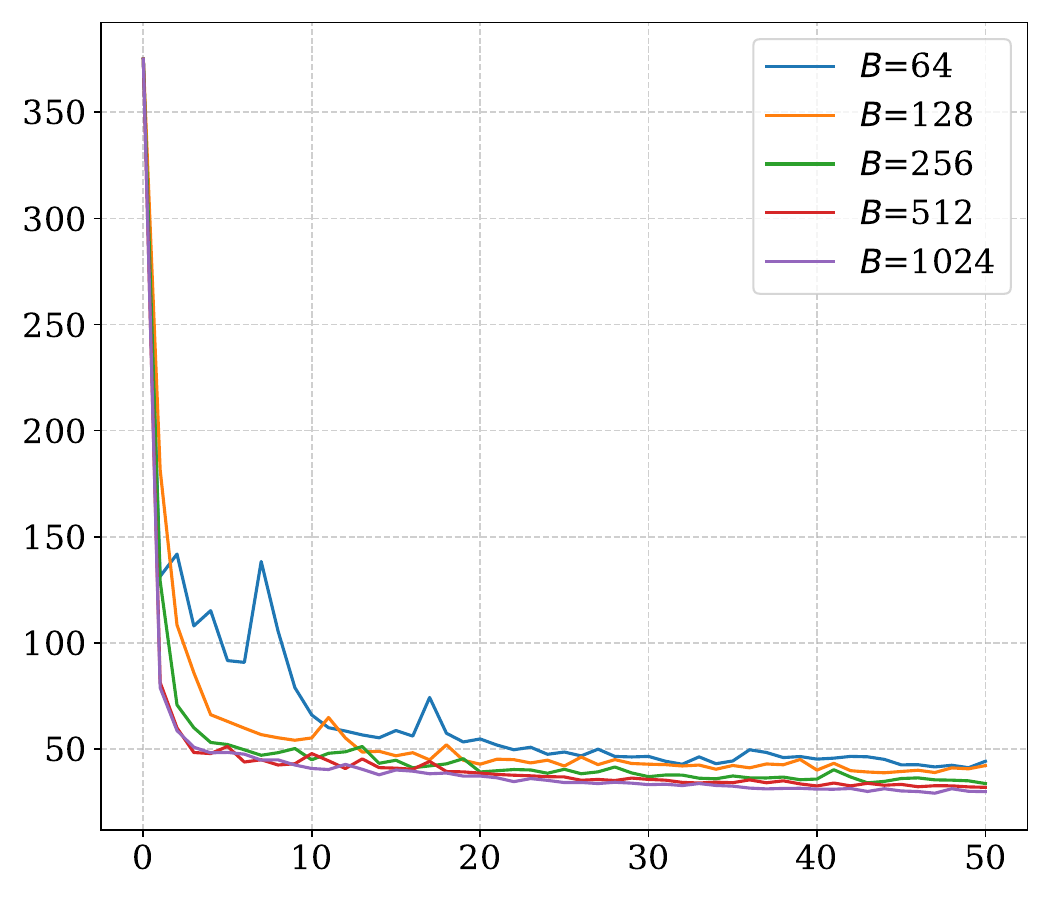}&
         \includegraphics[width=0.24\textwidth]{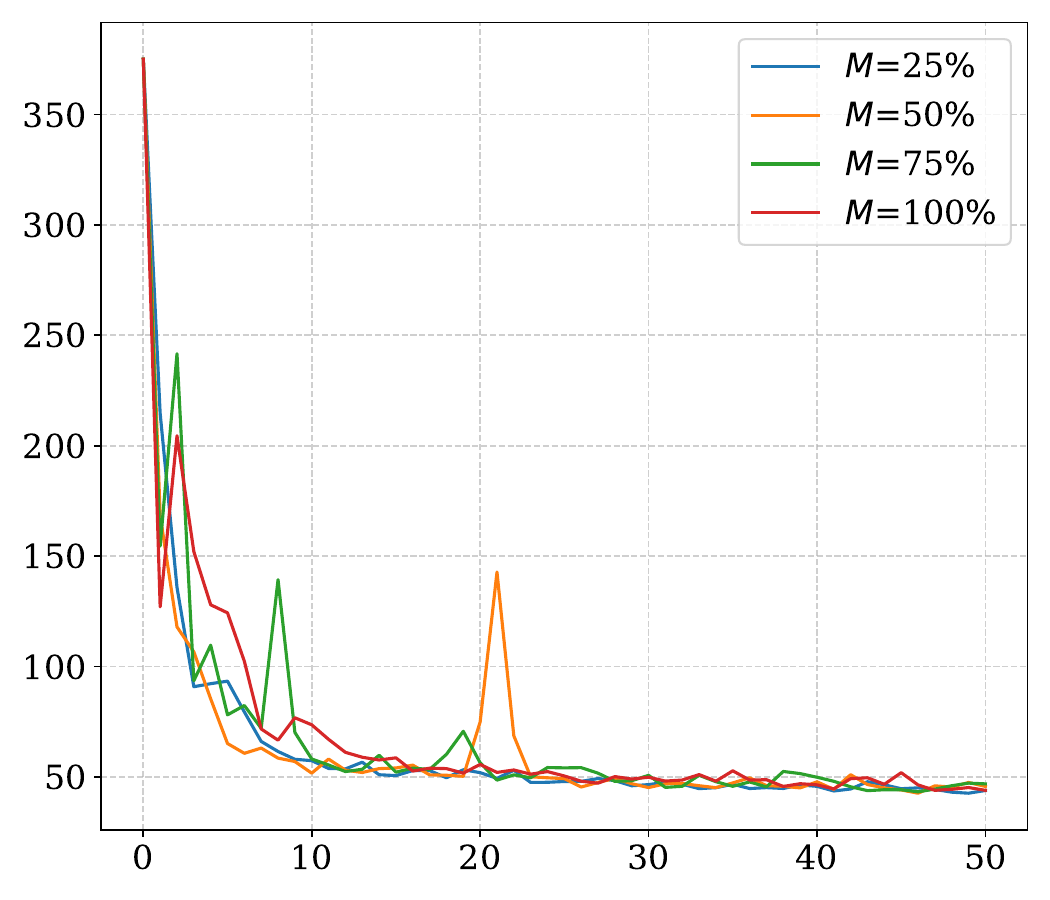} \tabularnewline
         
        \rotatebox{90}{\textbf{\small CNN, Classification}} & 
        \includegraphics[width=0.24\textwidth]{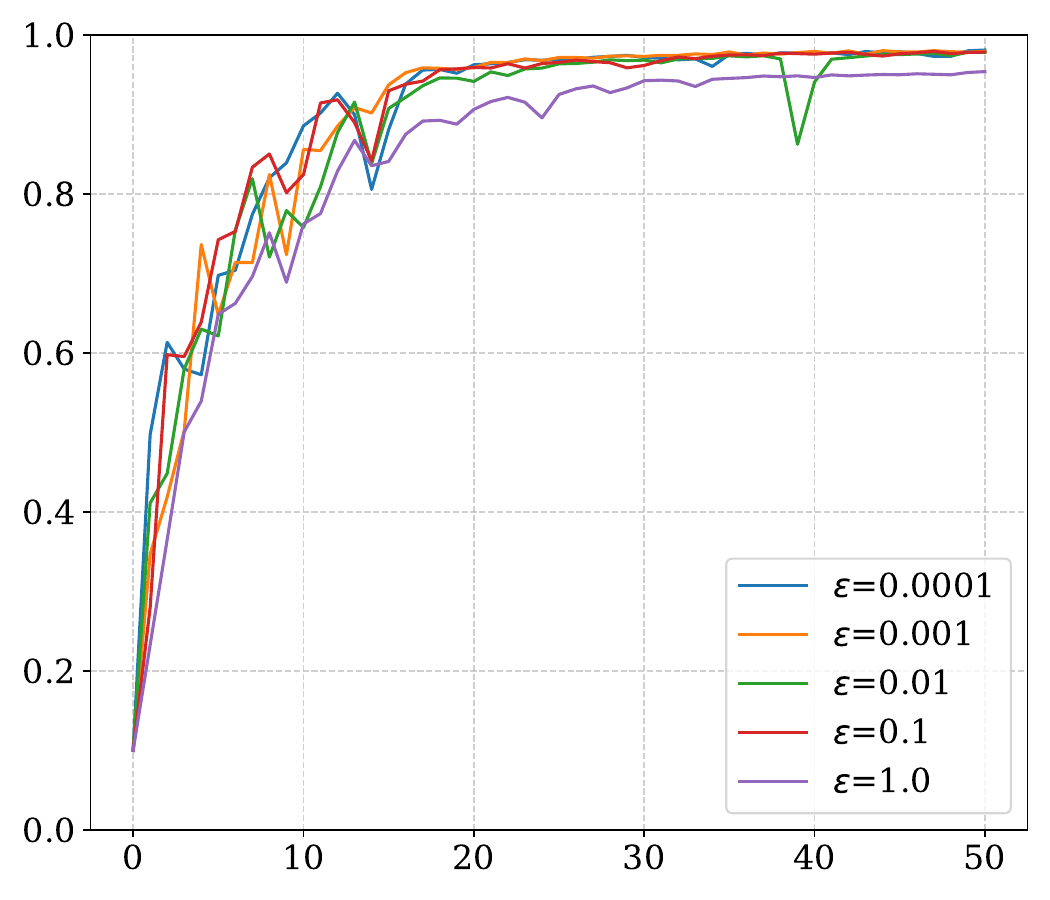} & 
        \includegraphics[width=0.24\textwidth]{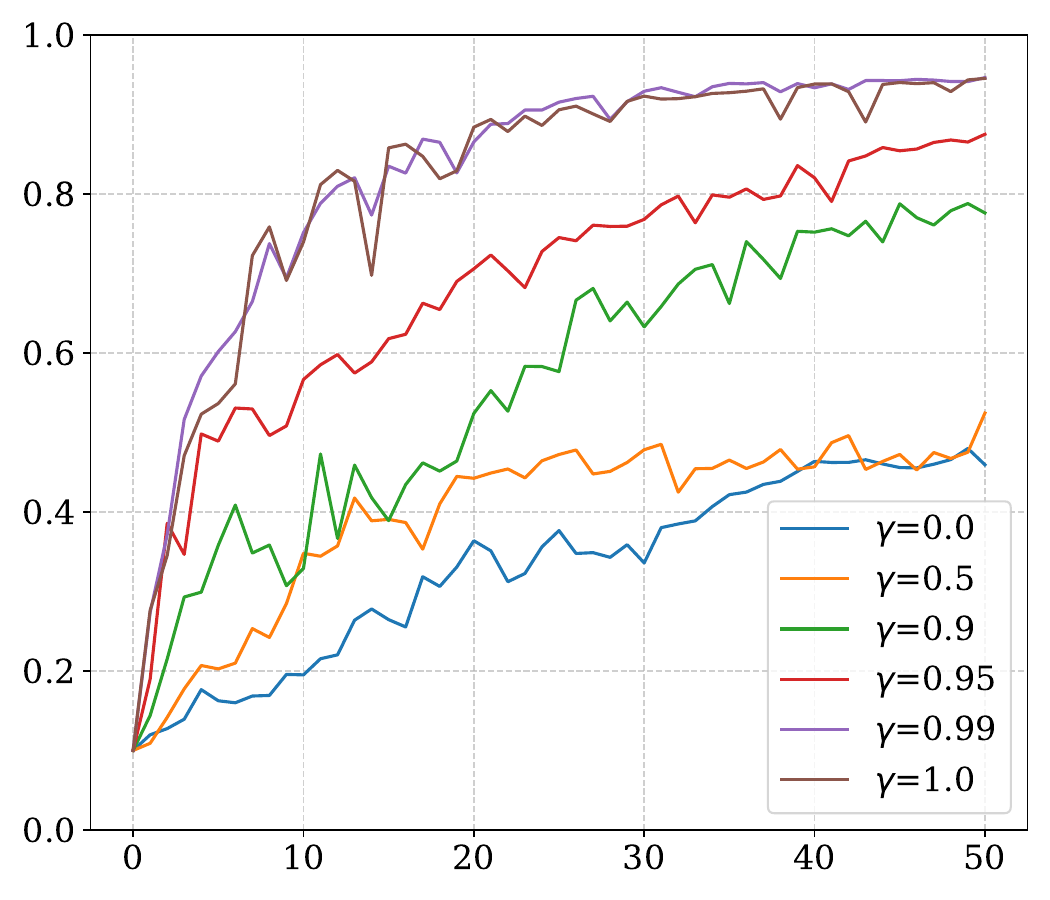} & 
        \includegraphics[width=0.24\textwidth]{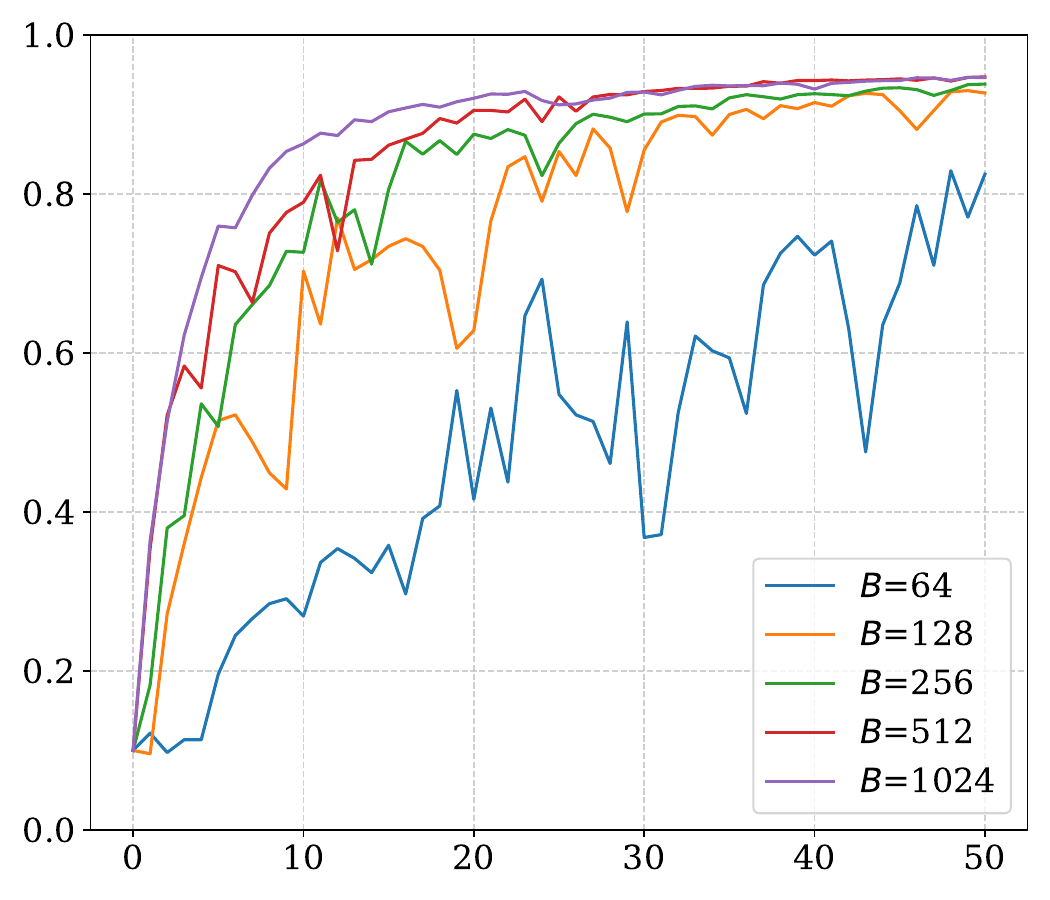} &
         \includegraphics[width=0.24\textwidth]{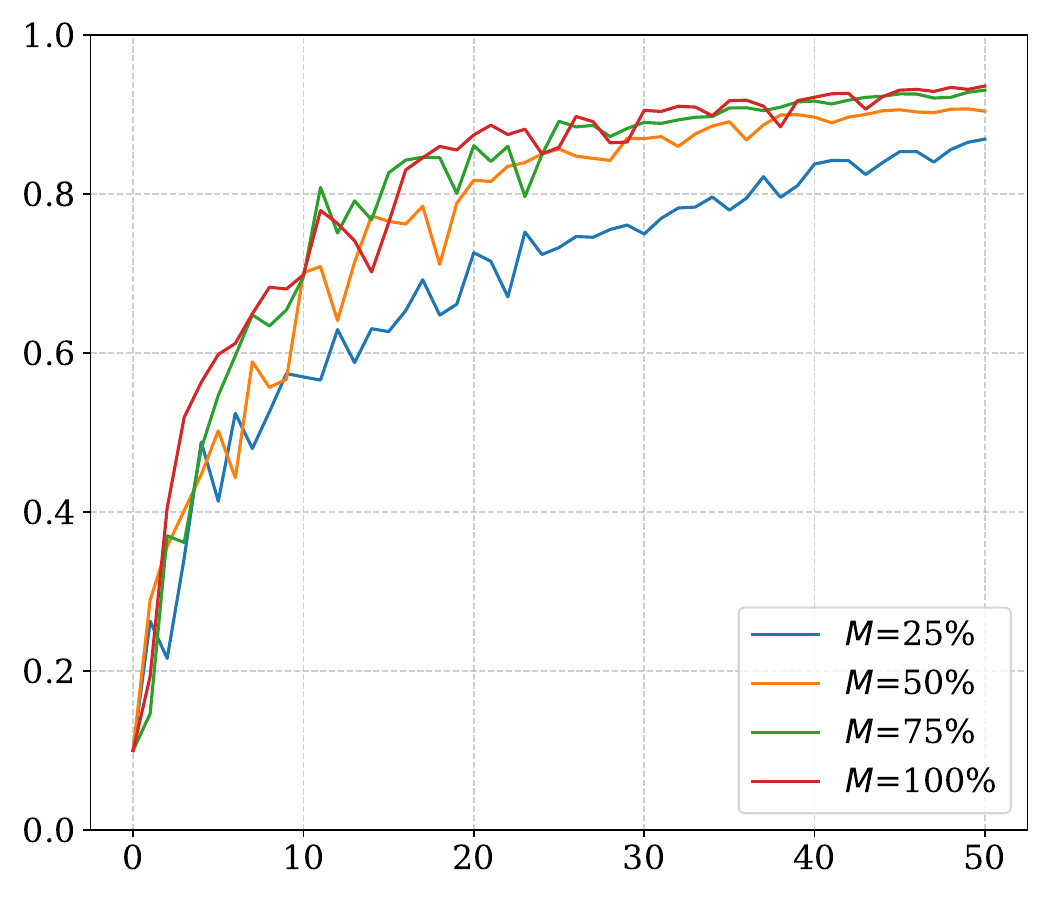} \tabularnewline
    \end{tabular}
    \caption{Ablation results on SARCOS (top, y-axis is validation loss) and MNIST (bottom, y-axis is validation accuracy).}
    \label{fig:ablation_grid}
\end{figure*}

\paragraph{Impact of Smoothing Radius ($\epsilon$).}
On SARCOS, performance improves monotonically with increasing $\epsilon$, as larger smoothing filters out high-frequency noise in a highly non-convex landscape. On MNIST, the trend is more nuanced: smaller $\epsilon$ yields higher final accuracy when the compute budget is large, but larger $\epsilon$ produces noticeably smoother training curves and improved stability.

\paragraph{Impact of Momentum ($\gamma$).}
Momentum is the dominant factor governing convergence speed and robustness. Setting $\gamma=0$ yields the slowest convergence, while increasing $\gamma$ consistently accelerates training. In an extreme sparsity stress test on SARCOS ($B=1$), tuning $\gamma=0.95$ and $\epsilon=1.0$ allows CoCD to converge to a loss of $68.64$, whereas BCCD ($\gamma=0$) stagnates near $188.0$ even with $B=64$, demonstrating that gradient history is essential for coherent descent.

\paragraph{Impact of Compute ($B$) and Memory ($M$).}
We observe a clear stability threshold with respect to $B$: for the MLP, $B=64$ is the minimum budget required to rapidly escape the initial plateau, while larger budgets yield diminishing returns. Reducing the memory budget $M$ introduces additional noise but does not prevent convergence. Even with $M=0.25$, CoCD maintains competitive performance, validating the memory-efficient design described in Section~\ref{subsec:implementation}.

\subsection{Comparison with Random-Update Baselines}

Finally, we compare CoCD against randomized zeroth-order baselines (SPSA and ZO-SGD~\cite{ghadimi2013stochastic}) under a matched compute budget per iteration. The results can be seen in Figure~\ref{fig:random_comparison}.

\begin{figure}[h]
    \centering
    \includegraphics[width=0.85\linewidth]{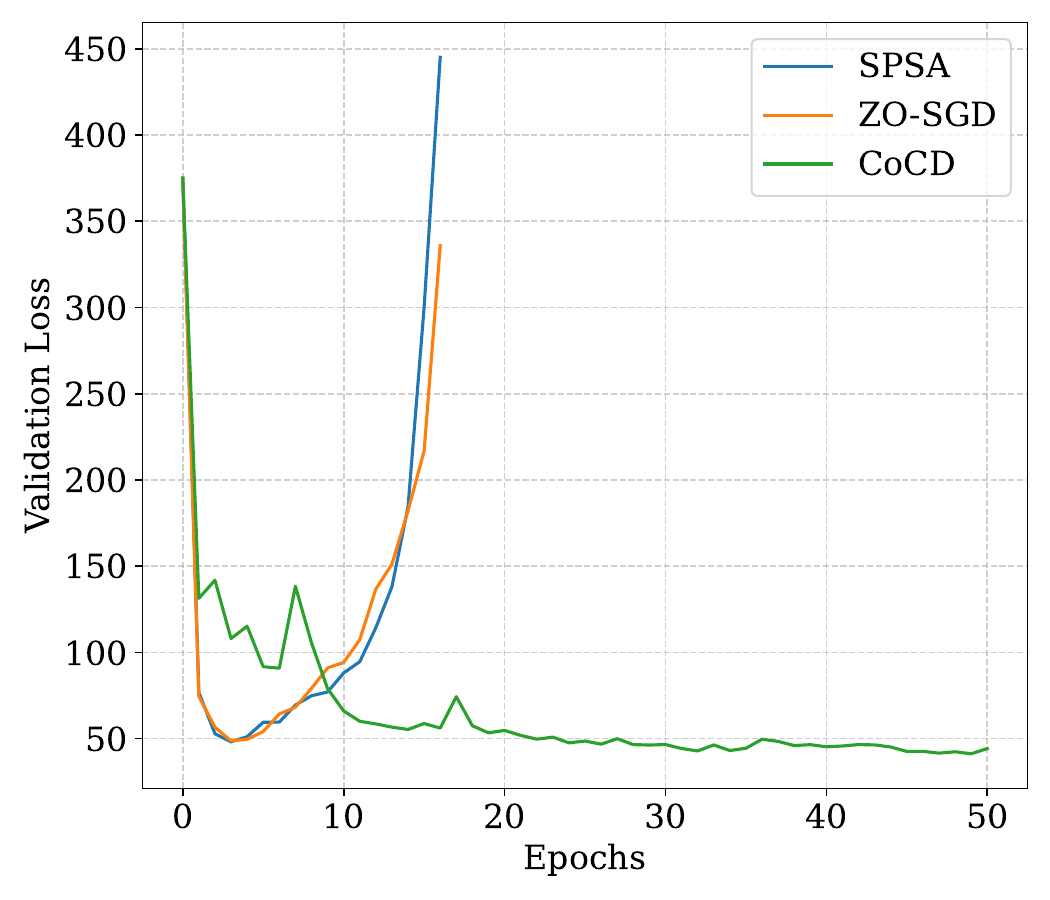} 
    \caption{Validation loss comparison among CoCD, ZO-SGD and SPSA under equal function-evaluation budgets (64). In terms of wall clock time, CoCD is almost 2x faster than ZO-SGD. (\texttt{8.1s} vs. \texttt{15.7s} per episode on average).}
    \label{fig:random_comparison}
\end{figure}

\textbf{Discussion.}
On classification tasks, baselines match CoCD's performance in terms of the final accuracy achieved. However, for the regression task on SARCOS, as shown in Figure~\ref{fig:random_comparison}, baselines using randomized updates diverges half way while CoCD, which tends to exhibit diffusive behavior due to isotropic sampling, stabilizes. By exploiting coordinate-wise structure and temporal coherence, CoCD constructs both informative and stable update directions.

Especially for ZO-SGD which relies on explicit smoothing, it requires the computational overhead of drawing dense Gaussian samples at every single step. CoCD leverages implicit smoothing deterministically, completely bypassing this random sampling overhead for improved wall-clock speed. CoCD is almost 2x faster than ZO-SGD in our experiment. (\texttt{8.1s} vs. \texttt{15.7s} per episode on average).

\section{Conclusion and Future Work}
\label{sec:conclusion}

In this work, we introduced Coherent Coordinate Descent (CoCD), a deterministic zeroth-order optimization framework that offers an alternative to purely randomized gradient estimation. CoCD exploits the temporal continuity of optimization trajectories by reusing stale gradient information in a structured and lightweight manner, enabling efficient descent without stochastic sampling.

\textbf{Theoretical Contributions.}
We established a rigorous theoretical foundation for CoCD by proving its equivalence to Block Cyclic Coordinate Descent (BCCD) with stale warm starts. Under the Polyak-\L{}ojasiewicz (P\L{}) condition, we showed that CoCD achieves linear convergence despite operating on stale gradient estimates. Our analysis further derived explicit approximation error bounds that account for both gradient staleness and the finite difference interval $\epsilon$. These bounds provide a theoretical explanation for the empirically observed benefit of larger $\epsilon$ values: coarser finite differences implicitly smooth the optimization landscape by reducing the effective Lipschitz constant, thereby improving stability in non-convex settings.

\textbf{Empirical Findings.}
We evaluated CoCD on regression and classification benchmarks, including evaluations on MLPs, LeNet-style CNNs, and Resnet-20 (up to $\approx$270k parameters). The results consistently support the theory: (1) CoCD substantially outperforms standard BCCD, confirming that reusing stale gradient history is significantly more effective than discarding it; (2) larger smoothing radii $\epsilon$ improve training stability and often lead to better final performance, validating the predicted implicit smoothing effect; and (3) the momentum parameter $\gamma$ plays a critical stabilizing role, allowing CoCD to interpolate smoothly between aggressive history reuse and the conservative behavior of BCCD.

\textbf{Limitations and Future Directions.}

At present, CoCD is most effective in small-to-medium scale models. As the parameter count grows beyond a certain threshold, achieving sufficiently accurate coordinate-wise gradient estimates requires a compute budget that becomes impractically large. Future work will focus on addressing these scaling challenges through block-wise parallelization, adaptive compute and memory budgeting, and structured coordinate selection strategies that prioritize informative parameter groups. For system-level implementations, substituting DeepZero’s core sparse-CGE estimator with CoCD is a highly promising trajectory for scaling up. Similarly, adapting CoCD for LLM fine-tuning tasks to compare its efficiency against MeZO~\cite{malladi2023fine} (which is based on ZO-SGD) presents an exciting direction as well. Additional directions include extending the theoretical analysis beyond the P\L{} condition, developing adaptive schemes for selecting $\epsilon$ and $\gamma$ based on local landscape properties, and examining the gradient-reuse strategy in optimizers beyond zeroth-order, such as AdamW~\cite{loshchilov2019decoupled} and Muon~\cite{jordan2024muon}. 




\newpage
\section*{Impact Statement}

This work introduces Coherent Coordinate Descent (CoCD), a deterministic zeroth-order optimization method for settings where gradient information is unavailable or expensive to obtain. By enabling efficient optimization using only function evaluations and limited memory, CoCD may benefit applications in edge AI, embedded systems, and other resource-constrained environments.

The proposed method does not introduce new classes of societal risk beyond those inherent to machine learning optimization techniques. CoCD does not increase model expressivity, autonomy, or data access, and does not enable new forms of data collection or surveillance. As with other optimization methods, misuse remains possible if applied irresponsibly.

Finally, CoCD is currently most effective in small-to-medium scale models and memory-constrained regimes. Future extensions to larger systems should be evaluated carefully to ensure efficiency gains do not come at the expense of stability or robustness. Overall, we view CoCD as a focused methodological contribution with predominantly positive potential impact when used responsibly.

\bibliographystyle{icml2026}
\bibliography{references}
\newpage
\appendix
\onecolumn

\section{Proofs of Error Bounds}
\label{app:proof_error_bound}

\begin{theorem}[Approximation Error Bound]
    Suppose $f: \mathbb{R}^{n} \to \mathbb{R}$ is locally coherent with the smoothness parameter $L$. At step $t$, let $\hat{\mathbf{g}}_t$ denote the gradient estimate maintained by CoCD with a compute budget of $B$ queries per step and a finite difference interval $\epsilon$. Let $K = \lfloor \frac{n}{B} \rfloor$ and $r = n \bmod B$. Then the approximation error satisfies:
    \begin{equation}
    \begin{aligned}
        \left\|\hat{\mathbf{g}}_t - \tilde{\nabla}^{\epsilon} f(\mathbf{x}_t)\right\| &\leq \frac{L_{\epsilon} \delta}{2} \left( 
        B K (K - 1) + 2r K 
        \right) \\
        &\leq \frac{L \delta}{2} \left( 
        B K (K - 1) + 2r K  \right).
          \end{aligned}
    \end{equation}
\end{theorem}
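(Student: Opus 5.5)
The plan is to decompose the staleness error coordinate by coordinate, bound each stale entry through the coordinate-wise Lipschitz constant $L_\epsilon$ and the step bound $\delta$ of Definition~\ref{def:local_coherence}, and then sum over blocks grouped by their age. We work with $\gamma=1$ and $m=n$. At step $t$, the cyclic schedule means every coordinate $i$ was last refreshed at some step $t-s_i$ with $s_i\ge 0$, and since then it has not been touched. So
\[
\hat{\mathbf{g}}_t[i]=\tilde{\nabla}^{\epsilon}_i f(\mathbf{x}_{t-s_i}).
\]
We first establish the staleness profile by induction on the cyclic pointer once the buffer has been filled (after the first $\lceil n/B\rceil$ steps). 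The $B$ coordinates refreshed at step $t$ have $s_i=0$, those refreshed at step $t-1$ have $s_i=1$, and so on. Since $n=BK+r$, the pattern is $K$ full blocks of size $B$ with ages $0,1,\dots,K-1$, plus the $r$ leftover coordinates with age at most $K$. The block boundaries wrap around the cycle, so we will handle this by simply recording the maximal age in each age class.

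Next, for a coordinate of age $s$, the definition of $L_\epsilon$ and the triangle inequality along the trajectory give
\[
|\hat{\mathbf{g}}_t[i]-\tilde{\nabla}^{\epsilon}_i f(\mathbf{x}_t)|\le L_\epsilon\|\mathbf{x}_t-\mathbf{x}_{t-s}\|\le L_\epsilon\sum_{j=t-s+1}^{t}\|\mathbf{x}_j-\mathbf{x}_{j-1}\|\le L_\epsilon s\,\delta .
\]
Local coherence on the window $\mathcal{T}_{t,K}$ guarantees these iterates lie where the smoothness assumptions hold. The coordinate-wise constant is the right tool here because each buffer entry is a single component.

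We then sum over blocks. Using $\|\cdot\|_2\le\|\cdot\|_1$, the error is at most
\[
L_\epsilon\delta\Big(\sum_{s=0}^{K-1}Bs+rK\Big)=L_\epsilon\delta\Big(\tfrac{BK(K-1)}{2}+rK\Big)=\tfrac{L_\epsilon\delta}{2}\big(BK(K-1)+2rK\big),
\]
which matches the first inequality exactly.

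The second inequality requires $L_\epsilon\le L$. We prove it by writing
\[
\tilde{\nabla}^{\epsilon}_i f(\mathbf{x})=\frac{1}{2\epsilon}\int_{-\epsilon}^{\epsilon}\nabla_i f(\mathbf{x}+u\mathbf{e}_i)\,du,
\]
so that
\[
|\tilde{\nabla}^{\epsilon}_i f(\mathbf{x})-\tilde{\nabla}^{\epsilon}_i f(\mathbf{y})|\le\sup_u|\nabla_i f(\mathbf{x}+u\mathbf{e}_i)-\nabla_i f(\mathbf{y}+u\mathbf{e}_i)|\le\sup_u\|\nabla f(\mathbf{x}+u\mathbf{e}_i)-\nabla f(\mathbf{y}+u\mathbf{e}_i)\|\le L\|\mathbf{x}-\mathbf{y}\|.
\]

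The main obstacle is the bookkeeping for the age classes. The leftover $r$ coordinates and the wrap-around of the cyclic pointer mean that the blocks refreshed at step $t-s$ need not be contiguous or of equal size. The precise count of coordinates at each age needs care, particularly whether the leftover group has age $K$ or $K-1$ depending on the phase of the pointer. I would try first to show that in every phase, at most $B$ coordinates have age $s$ for each $s<K$, and at most $r$ have age $K$. Dominating by this worst case then yields the stated sum.

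There is a second subtlety. The per-coordinate bound requires $\delta$ to control every step in a window of length $K$, so it is cleanest to take $\delta=\delta_{t,K}$ in Definition~\ref{def:local_coherence}. Smoothness must then hold on the segments $\mathbf{x}_{t-s}+u\mathbf{e}_i$, which extend slightly beyond the convex hull of the window. I would state this as assuming coherence on an $\epsilon$-neighborhood of that hull.
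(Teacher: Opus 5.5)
Your proposal is correct and follows essentially the same route as the paper. The paper likewise proves $L_\epsilon\le L$ from the averaged-integral form of the central difference, bounds each stale coordinate by $L_\epsilon\tau\delta$, counts $B$ coordinates at each lag $0,\dots,K-1$ and $r$ at lag $K$, and sums, implicitly using $\|\cdot\|_2\le\|\cdot\|_1$. Two remarks on your caveats: the age bookkeeping you flag is in fact exact in every phase, since the last $n$ queries occupy $n$ consecutive positions of the cyclic sequence, so each coordinate appears exactly once and the $r$ leftover coordinates always have lag exactly $K$. Your remark that smoothness is needed on an $\epsilon$-neighborhood of the window's hull is a genuine point that the paper glosses over.
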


\begin{proof}
    We begin by proving that $L_{\epsilon} \leq L$.
    For the central finite-difference gradient,
\[
    \nabla^\epsilon_i f(\mathbf{x})
    =
    \frac{
        f(\mathbf{x}+\epsilon\mathbf{e}_i)
        -
        f(\mathbf{x}-\epsilon\mathbf{e}_i)
    }{2\epsilon},
\]
where \(\mathbf{e}_i\) is the \(i\)-th standard basis vector.
By the Fundamental Theorem of Calculus,
\[
     \nabla^\epsilon_i f(\mathbf{x})
    =
    \frac{1}{2\epsilon}
    \int_{-\epsilon}^{\epsilon}
    \nabla_i f(\mathbf{x}+u\mathbf{e}_i)\,du .
\]
Hence, for any \(\mathbf{x},\mathbf{y}\in\mathbb{R}^n\),
\begin{equation}
\begin{aligned}
    | \nabla^\epsilon_i f(\mathbf{x})- \nabla^\epsilon_i f(\mathbf{y})|
    &\le
    \frac{1}{2\epsilon}
    \int_{-\epsilon}^{\epsilon}
    \left|
        \nabla_i f(\mathbf{x}+u\mathbf{e}_i)
        -
        \nabla_i f(\mathbf{y}+u\mathbf{e}_i)
    \right|du \\
    &\le
    \frac{1}{2\epsilon}
    \int_{-\epsilon}^{\epsilon}
    \left\|
        \nabla f(\mathbf{x}+u\mathbf{e}_i)
        -
        \nabla f(\mathbf{y}+u\mathbf{e}_i)
    \right\|du \\
    &\le
    \frac{1}{2\epsilon}
    \int_{-\epsilon}^{\epsilon}
    L\|\mathbf{x}-\mathbf{y}\|du \\
    &=
    L\|\mathbf{x}-\mathbf{y}\|.
\end{aligned}
\end{equation}
Therefore,
\[
    \sup_{\mathbf{x}\neq\mathbf{y}}
    \frac{
        | \nabla^\epsilon_i f(\mathbf{x})- \nabla^\epsilon_i f(\mathbf{y})|
    }{
        \|\mathbf{x}-\mathbf{y}\|
    }
    \le L
\]
for every \(i\in[n]\). Taking the maximum over \(i\) gives
\[
    L_\epsilon
    \le L.
\]
The same argument applies to forward and backward finite differences by
replacing the averaging interval \([-\epsilon,\epsilon]\) with \([0,\epsilon]\)
or \([-\epsilon,0]\), respectively.

    The gradient estimate $\hat{\mathbf{g}}_t$ maintained by CoCD is a composite of partial derivatives computed at varying degrees of staleness. We partition the $n$ coordinates based on the time lag $\tau$ since their last update. 
    
    Since $B$ coordinates are updated per step, we have $K = \lfloor n/B \rfloor$ full blocks and one remainder block of size $r$. At step $t$:
    \begin{itemize}
        \item For each lag $\tau \in \{0, 1, \dots, K-1\}$, there are exactly $B$ coordinates that were last updated at step $t-\tau$.
        \item The remaining $r$ coordinates were last updated at step $t-K$.
    \end{itemize}
    
    Let $g_i(\mathbf{x}) = \tilde{\nabla}^{\epsilon}_i f(\mathbf{x})$. By the uniform coordinate-wise Lipschitz continuity of $\tilde{\nabla}^{\epsilon} f$, the error for a single coordinate $i$ updated $\tau$ steps ago is bounded by:
    \begin{equation}
        | \hat{g}_{t,i} - g_i(\mathbf{x}_t) | = | g_i(\mathbf{x}_{t-\tau}) - g_i(\mathbf{x}_t) | \leq L_{\epsilon} \|\mathbf{x}_{t-\tau} - \mathbf{x}_t\| \leq L_{\epsilon} \tau \delta.
    \end{equation}
    Summing this error over all $n$ coordinates:
    \begin{equation}
    \begin{aligned}
        \left\|\hat{\mathbf{g}}_t - \tilde{\nabla}^{\epsilon} f(\mathbf{x}_t)\right\| 
        &\leq \sum_{\tau=0}^{K-1} \underbrace{B \cdot (L_{\epsilon} \tau \delta)}_{\text{Block } \tau} + \underbrace{r \cdot (L_{\epsilon} K \delta)}_{\text{Remainder}} \\
        &= L_{\epsilon}\delta B \sum_{\tau=0}^{K-1} \tau + L_{\epsilon}\delta r K \\
        &=L_{\epsilon}\delta B \frac{K(K-1)}{2} + L_{\epsilon} \delta r K \\
        &= \frac{L_{\epsilon}\delta}{2} \left( B K (K-1) + 2rK \right)\\
        \\
        &\leq \frac{L \delta}{2} \left( 
        B K (K - 1) + 2r K  \right).
    \end{aligned}
    \end{equation}
\end{proof}

\section{Empirical Validation of Error Bounds}
\label{app:emp_verification}
To empirically verify Theorem~\ref{thm:error_bound}, we conducted an experiment plotting the average $\|\hat{\mathbf{g}}_t-\tilde{\nabla}^{\epsilon} f(\mathbf{x}_t)\|$ with respect to the compute budget $B$ during the training of a feed-forward network with $\approx4k$ parameters on the SARCOS dataset.
\begin{figure}[h]
    \centering
    \includegraphics[width=0.65\linewidth]{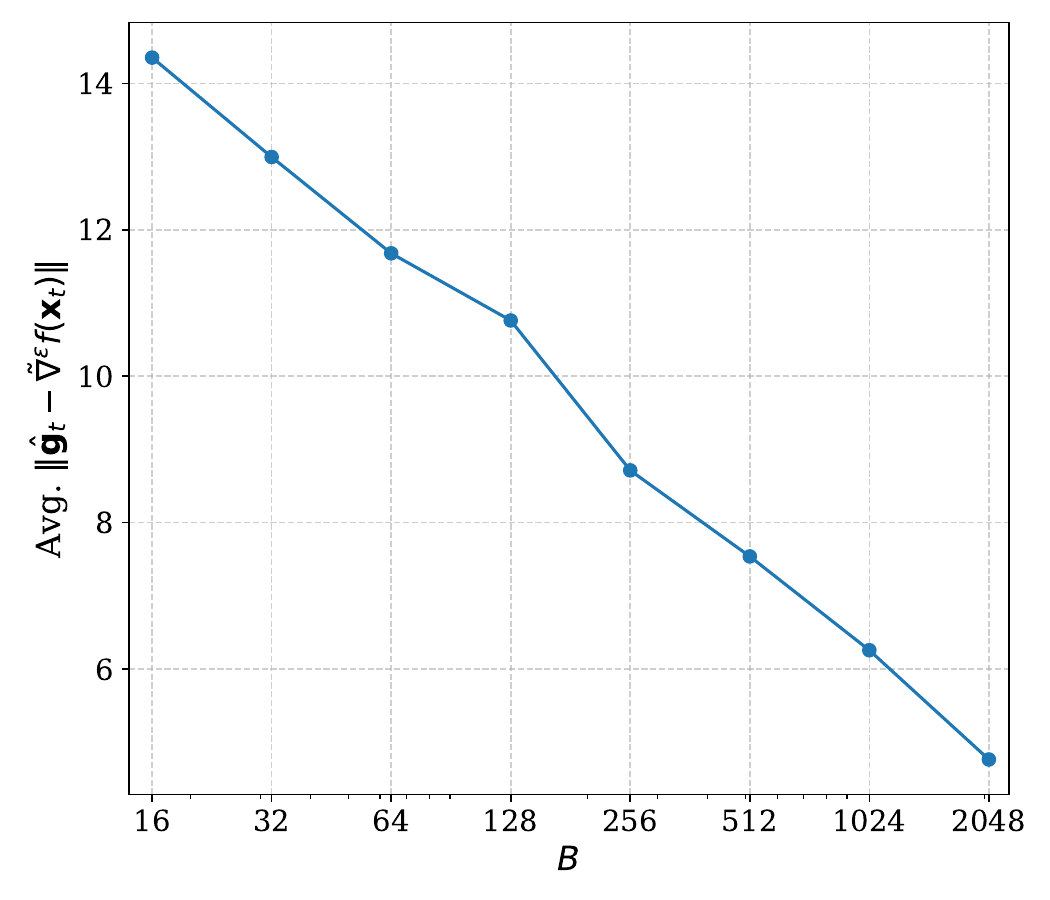} 
    \caption{Average gradient approximation error $\|\hat{\mathbf{g}}_t-\tilde{\nabla}^{\epsilon} f(\mathbf{x}_t)\|$ with respect to the different compute budgets $B$ ranging from 16 to 2048.}
    \label{fig:grad_error_verify}
\end{figure}

As shown in Fig.~\ref{fig:grad_error_verify}, the x-axis is log-scale and the error curve linearly fits a straight line $y=ax+b$ (where $a<0$). This exactly corroborates the theoretical bound derived in our theorem: as the compute budget increases, the deterministic approximation error decreases predictably.
\section{Proofs of Convergence of CoCD}
\label{app:proof_convergence}
\begin{theorem}[Linear Convergence of CoCD]
    Let Assumptions~\ref{asp:smooth},~\ref{asp:pl}, and~\ref{asp:acc_fd} hold, and let $\alpha$ denote the learning rate. Define the staleness factor $\tau = n/B - 1$. Then, after $t$ iterations,
    \begin{equation}
        f(\mathbf{x}_t) - f(\mathbf{x}^*) 
        \leq 
        \left(1 - \frac{2 \mu C_1}{C_2}\right)^t 
        \bigl(f(\mathbf{x}_0) - f(\mathbf{x}^*)\bigr),
    \end{equation}
    provided the constants
    \[
    C_1 = \frac{1}{\alpha} - \frac{L}{2}(1 + n\tau),
    \qquad
    C_2 = \frac{1}{\alpha^2} + \frac{Ln\tau}{\alpha}+\frac{L^2 n^2 \tau^2}{4}
    \]
    are positive.
\end{theorem}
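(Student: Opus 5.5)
We would prove Theorem~\ref{thm:convergence} by a one-step descent argument combined with the P\L{} inequality, treating CoCD as gradient descent with an inexact (stale) direction. By Assumption~\ref{asp:acc_fd} we identify $\tilde{\nabla}^{\epsilon} f$ with $\nabla f$, so $L_\epsilon=L$ and $\hat{\mathbf{g}}_t$ is a vector whose blocks are exact partial gradients evaluated at $\mathbf{x}_{t},\dots,\mathbf{x}_{t-\tau}$. Let $\mathbf{e}_t := \hat{\mathbf{g}}_t - \nabla f(\mathbf{x}_t)$ denote the staleness error. Adapting the argument of Theorem~\ref{thm:error_bound} with $K-1\le\tau$ and the block count, and bounding each stale block crudely by $L\tau\delta$ with $\delta=\alpha\max_j\|\hat{\mathbf{g}}_j\|$, we aim for an estimate of the form $\|\mathbf{e}_t\|\le \tfrac{L n\tau}{2}\,\alpha\|\hat{\mathbf{g}}_t\|$. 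This is the shape that produces the constants $C_1,C_2$.

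The steps are as follows. \emph{(i) Descent lemma.} With $\mathbf{x}_{t+1}=\mathbf{x}_t-\alpha\hat{\mathbf{g}}_t$ and Assumption~\ref{asp:smooth},
\[
f(\mathbf{x}_{t+1}) \le f(\mathbf{x}_t) - \alpha\langle \nabla f(\mathbf{x}_t),\hat{\mathbf{g}}_t\rangle + \tfrac{L\alpha^2}{2}\|\hat{\mathbf{g}}_t\|^2 .
\]
Writing $\nabla f(\mathbf{x}_t)=\hat{\mathbf{g}}_t-\mathbf{e}_t$ and applying Cauchy--Schwarz with the error bound gives
\[
f(\mathbf{x}_{t+1}) \le f(\mathbf{x}_t) - \alpha^2\Bigl(\tfrac1\alpha-\tfrac L2(1+n\tau)\Bigr)\|\hat{\mathbf{g}}_t\|^2 = f(\mathbf{x}_t)-\alpha^2C_1\|\hat{\mathbf{g}}_t\|^2 .
\]
\emph{(ii) Relating $\hat{\mathbf{g}}_t$ to the true gradient.} From $\|\nabla f(\mathbf{x}_t)\|\le \|\hat{\mathbf{g}}_t\|+\|\mathbf{e}_t\|\le (1+\tfrac{Ln\tau\alpha}{2})\|\hat{\mathbf{g}}_t\|$, squaring yields
\[
\|\nabla f(\mathbf{x}_t)\|^2\le \alpha^2 C_2\|\hat{\mathbf{g}}_t\|^2,
\]
since $(\tfrac1\alpha+\tfrac{Ln\tau}{2})^2 = C_2$. \emph{(iii) P\L{}.} Combining, $\alpha^2\|\hat{\mathbf{g}}_t\|^2\ge \|\nabla f(\mathbf{x}_t)\|^2/C_2\ge 2\mu(f(\mathbf{x}_t)-f^*)/C_2$. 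Hence
\[
f(\mathbf{x}_{t+1})-f^*\le\bigl(1-\tfrac{2\mu C_1}{C_2}\bigr)(f(\mathbf{x}_t)-f^*),
\]
and induction finishes the proof, using $C_1>0$ to make the contraction meaningful. It is also worth noting that $C_1\le\sqrt{C_2}$ keeps the factor nonnegative whenever $\mu\le L$.

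The main obstacle is step (i)'s input: justifying $\|\mathbf{e}_t\|\lesssim \tfrac{Ln\tau}{2}\alpha\|\hat{\mathbf{g}}_t\|$ in terms of the \emph{current} estimate. Theorem~\ref{thm:error_bound} naturally produces $\delta$, the largest of the past steps $\alpha\|\hat{\mathbf{g}}_{t-j}\|$, rather than the current one. I would first handle this by expressing the lag differences $\mathbf{x}_t-\mathbf{x}_{t-j}$ through the stored blocks. Since consecutive estimates share all but one block, $\hat{\mathbf{g}}_{t-j}$ and $\hat{\mathbf{g}}_t$ differ only on refreshed blocks. This suggests bounding past step norms by the current one up to the same $L n\tau$ factors, with the constant $n$ absorbing the crude per-coordinate summation used in Theorem~\ref{thm:error_bound}.

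If that fails, the fallback is to define $\delta_t$ via a monotonicity argument: show inductively that $\|\hat{\mathbf{g}}_t\|$ is nonincreasing along the window under $C_1>0$. Alternatively, one can state the error bound with $\delta$ and absorb the discrepancy into the constants. Everything else is routine algebra.
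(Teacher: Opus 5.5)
\textbf{There is a genuine gap, and it sits at the step you flagged.} Your outline follows the paper's proof step for step: the descent lemma, Cauchy--Schwarz against the staleness error, the triangle inequality giving $C_2=(\tfrac1\alpha+\tfrac{Ln\tau}{2})^2$, then P\L{} and induction. Your algebra in (i)--(iii) is correct \emph{given} the estimate $\|\mathbf{e}_t\|\le\tfrac{Ln\tau}{2}\alpha\|\hat{\mathbf{g}}_t\|$.

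That estimate is not merely unproven; it is false. The vector $\hat{\mathbf{g}}_t$ is assembled from partial derivatives taken at different iterates, so it can vanish at a non-stationary point. Here is an explicit case.
\begin{itemize}
\item Take $n=2$, $B=1$ (so $\tau=1$), $\alpha=1$, and $f(\mathbf{x})=\tfrac12\mathbf{x}^\top A\mathbf{x}$ with $A=\bigl(\begin{smallmatrix}0.45 & b\\ b & 0.2\end{smallmatrix}\bigr)$, $b=0.2\sqrt2$.
\item Then $\det A=0.01$, $L=\lambda_{\max}(A)\approx 0.634$, $\mu=\lambda_{\min}(A)\approx 0.016>0$, and $C_1=1-\tfrac{3L}{2}\approx 0.049>0$.
\item Start at the point $\mathbf{x}_0$ with $\nabla f(\mathbf{x}_0)=(1,\tfrac{\sqrt2}{4})$. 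Warm-start the buffer with $\nabla f(\mathbf{x}_0)$ and refresh coordinates in the order $1,2,1,\dots$.
\item Step 0 gives $\hat{\mathbf{g}}_0=(1,\tfrac{\sqrt2}{4})$ and $\nabla f(\mathbf{x}_1)=(0.45,0)$.
\item Step 1 gives $\hat{\mathbf{g}}_1=(1,0)$ and $\nabla f(\mathbf{x}_2)=(0,-0.2\sqrt2)$.
\item Step 2 gives $\hat{\mathbf{g}}_2=(\partial_1 f(\mathbf{x}_2),\partial_2 f(\mathbf{x}_1))=(0,0)$.
\end{itemize}
Every stored entry is an exact partial derivative at an iterate within lag $\tau$, so the hypotheses of Theorem~\ref{thm:error_bound} hold. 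Yet $\|\mathbf{e}_2\|=0.2\sqrt2>0=\tfrac{Ln\tau}{2}\alpha\|\hat{\mathbf{g}}_2\|$. Your step (ii) therefore fails. The one-step contraction fails too, because $\mathbf{x}_3=\mathbf{x}_2$ while $f(\mathbf{x}_2)>f^*$.

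\textbf{Your proposed repairs also fail, and the paper's proof does not supply the missing idea.} In the same example the estimate norms run $\|\hat{\mathbf{g}}_1\|=1$, $\|\hat{\mathbf{g}}_2\|=0$, $\|\hat{\mathbf{g}}_3\|=0.2\sqrt2$. So $\|\hat{\mathbf{g}}_t\|$ is not monotone, and past step lengths cannot be bounded by the current one.

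Absorbing the discrepancy into the constants is what the paper does, and it does not work. The paper bounds the staleness error by the window maximum $\delta$. Then, in \eqref{eq:converge_1.1}, it replaces $(-\tfrac1\alpha+\tfrac L2)\|\mathbf{x}_t-\mathbf{x}_{t-1}\|^2$ by $(-\tfrac1\alpha+\tfrac L2)\delta^2$. The coefficient is negative and $\|\mathbf{x}_t-\mathbf{x}_{t-1}\|\le\delta$, so this inequality points the wrong way unless the latest step is the largest in the window. The paper's argument is thus your argument with the missing estimate tacitly assumed.

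A valid proof needs one of two things:
\begin{itemize}
\item an explicit extra hypothesis, for example step lengths nonincreasing over the window, so that $\delta=\|\mathbf{x}_t-\mathbf{x}_{t-1}\|$; or
\item a genuinely multi-step argument, for example a Lyapunov function consisting of $f(\mathbf{x}_t)$ plus a weighted sum of the last $\tau$ squared step lengths, as in analyses of delayed-gradient methods. Such an argument cannot yield the per-step factor $1-\tfrac{2\mu C_1}{C_2}$ from every state, as the example shows.
\end{itemize}

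You also need a convention for the initial buffer. With the zero initialization in the paper's pseudocode, even Theorem~\ref{thm:error_bound} fails during the first cycle. For instance, $f=\tfrac12\|\mathbf{x}\|^2$, $\mathbf{x}_0=(0,1)$, $B=1$ gives $\hat{\mathbf{g}}_0=0$ and $\mathbf{x}_1=\mathbf{x}_0$, which contradicts the stated bound at $t=1$.
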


\begin{proof}
    By the $L$-smoothness of $f$ (Assumption~\ref{asp:smooth}), we have the standard inequality:
    \begin{equation}
    \label{eq:converge_1}
    f(\mathbf{x}_t) - f(\mathbf{x}_{t-1}) \leq \nabla f(\mathbf{x}_{t-1})^\top(\mathbf{x}_{t}-\mathbf{x}_{t-1}) + \frac{L}{2}\|\mathbf{x}_{t}-\mathbf{x}_{t-1}\|^2.
    \end{equation}
    
    The CoCD update rule is given by $\mathbf{x}_t = \mathbf{x}_{t-1} - \alpha \hat{\mathbf{g}}_{t-1}$, which implies:
    \begin{equation}
    \label{eq:converge_descent_step}
        \hat{\mathbf{g}}_{t-1} = \frac{1}{\alpha} (\mathbf{x}_{t-1} -\mathbf{x}_{t}).
    \end{equation} 
    We define the gradient approximation error (correction term) at step $t-1$ as:
    \begin{equation}
    \label{eq:converge_correct}
        \mathbf{c}_{t-1} = \nabla f(\mathbf{x}_{t-1}) - \hat{\mathbf{g}}_{t-1}.
    \end{equation}
    
    Under Assumption~\ref{asp:acc_fd}, the error induced by finite differences is zero. We then invoke Theorem~\ref{thm:error_bound} to directly bound the correction term. Assuming for simplicity that $n$ is divisible by the budget $B$ (i.e., $r=0$), we have $K = n/B$. Substituting $\tau = n/B - 1 = K-1$ into the bound from Theorem~\ref{thm:error_bound} yields:
    \begin{equation}
        \|\mathbf{c}_{t-1}\| \leq \frac{L \delta}{2} \left( B \cdot \frac{n}{B} \left(\frac{n}{B} - 1\right) \right) = \frac{L \delta}{2} n \tau,
    \end{equation}
    where we define $\delta=\delta_{t, t}$ under Definition~\ref{def:local_coherence} (maximum step size over the window $\{0,1,...,t\}$), so we have $\delta \geq \|\mathbf{x}_{t}-\mathbf{x}_{t-1}\|$.
    
    Substituting $\nabla f(\mathbf{x}_{t-1}) = \hat{\mathbf{g}}_{t-1} + \mathbf{c}_{t-1}$ into Eq.~\ref{eq:converge_1}:
    \begin{equation}
     \label{eq:converge_1.1}
     \begin{aligned}
       f(\mathbf{x}_t) - f(\mathbf{x}_{t-1}) & \leq \hat{\mathbf{g}}_{t-1}^\top (\mathbf{x}_{t}-\mathbf{x}_{t-1}) + \mathbf{c}_{t-1}^\top (\mathbf{x}_{t}-\mathbf{x}_{t-1}) + \frac{L}{2}\|\mathbf{x}_{t}-\mathbf{x}_{t-1}\|^2 \\
       &= -\frac{1}{\alpha}\|\mathbf{x}_{t}-\mathbf{x}_{t-1}\|^2 + \mathbf{c}_{t-1}^\top (\mathbf{x}_{t}-\mathbf{x}_{t-1}) + \frac{L}{2}\|\mathbf{x}_{t}-\mathbf{x}_{t-1}\|^2 \\
       &\leq \left(-\frac{1}{\alpha} + \frac{L}{2}\right)\|\mathbf{x}_{t}-\mathbf{x}_{t-1}\|^2 + \|\mathbf{c}_{t-1}\| \|\mathbf{x}_{t}-\mathbf{x}_{t-1}\| \\
       &\leq \left(-\frac{1}{\alpha} + \frac{L}{2}\right)\delta^2 + \left(\frac{L}{2} n \tau \delta\right) \delta \\
       &= -\left( \frac{1}{\alpha} - \frac{L}{2}(1 + n\tau) \right) \delta^2 = -C_1 \delta^2.
     \end{aligned}
    \end{equation}
    Rearranging terms gives a bound on the step size squared:
    \begin{equation}
      \label{eq:converge_1.2}
      \delta^2 \leq \frac{f(\mathbf{x}_{t-1}) - f(\mathbf{x}_t)}{C_1}.
    \end{equation}
    
    Next, we upper bound the squared norm of the true gradient. Using Eq.~\ref{eq:converge_correct} and the bounds established above:
    \begin{equation}
    \label{eq:converge_2}
    \begin{aligned}
        \|\nabla f(\mathbf{x}_{t-1})\|^2 & \leq \|\hat{\mathbf{g}}_{t-1}\|^2 + 2\hat{\mathbf{g}}_{t-1}^\top\mathbf{c}_{t-1}+ \|\mathbf{c}_{t-1}\|^2 \\
        & \leq \|\hat{\mathbf{g}}_{t-1}\|^2 + 2\|\hat{\mathbf{g}}_{t-1}\| \|\mathbf{c}_{t-1}\| +\|\mathbf{c}_{t-1}\|^2  \\
        & \leq \left( \frac{1}{\alpha^2} + \frac{Ln\tau}{\alpha}+\frac{L^2 n^2 \tau^2}{4} \right) \delta^2 = C_2 \delta^2.
    \end{aligned}
    \end{equation}
    
    We now combine these results with the Polyak-Łojasiewicz (PL) condition (Assumption~\ref{asp:pl}):
    \begin{equation}
    \label{eq:intermediate}
       2\mu(f(\mathbf{x}_{t-1}) - f(\mathbf{x}^*)) \leq \|\nabla f(\mathbf{x}_{t-1})\|^2 \leq C_2 \delta^2 \leq \frac{C_2}{C_1}(f(\mathbf{x}_{t-1}) - f(\mathbf{x}_{t})).
    \end{equation}
    Rearranging the inequality yields:
    \begin{equation}
        f(\mathbf{x}_{t}) - f(\mathbf{x}^*) \leq \left(1 - \frac{2\mu C_1}{C_2}\right) (f(\mathbf{x}_{t-1}) - f(\mathbf{x}^*)).
    \end{equation}
    Applying this recurrence $t$ times completes the proof.
\end{proof}

\begin{remark}[A Note on Practical Convergence]
\label{rmk:C1}
    Convergence can be much faster in practice than what the theorem suggests, because the constants $C_1$ and $C_2$ are conservative estimates of what they are in practice. The second inequality in Eq.~\ref{eq:converge_1.1} and the second  inequality in Eq.~\ref{eq:converge_2} overestimate the negative impact of the correction term by over-relaxing the cross terms \(\mathbf{c}_{t-1}^\top (\mathbf{x}_{t}-\mathbf{x}_{t-1})\) and \(\mathbf{c}_{t-1}^\top \hat{\mathbf{g}}_{t-1}\) to their worst cases. Therefore, the rate in practice should be smaller than $1-\frac{2\mu C_1}{C_2}$. Also, the step size $\alpha$ could be set to some values larger than the upper bound discussed in \S~\ref{subsec:stability_analysis} and still let the algorithm stably converge in practice.
\end{remark}

\begin{theorem}[Gradient Estimate Convergence]
     Under Assumption~\ref{asp:smooth}, if the learning rate $\alpha$ is chosen such that $C_1 > 0$ (as defined in Theorem~\ref{thm:convergence}), the sequence of gradient estimates satisfies
    \[
    \|\hat{\mathbf{g}}_{t} - \hat{\mathbf{g}}_{t-1}\| \to 0
    \quad \text{as } t \to \infty.
    \]
\end{theorem}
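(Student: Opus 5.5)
The plan is a telescoping argument built on the one-step descent inequality from the proof of Theorem~\ref{thm:convergence}. Eq.~\eqref{eq:converge_1.1} uses only Assumption~\ref{asp:smooth} and the staleness bound of Theorem~\ref{thm:error_bound}; the P\L{} condition enters only later. I will therefore rerun that chain with the \emph{actual} step length $\|\mathbf{x}_t-\mathbf{x}_{t-1}\|$ in place of the uniform bound $\delta$ wherever possible.

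\textbf{Step 1: per-step descent.} Reusing that chain, I aim for
\[
f(\mathbf{x}_{t-1})-f(\mathbf{x}_t)\;\ge\; C_1\,\|\mathbf{x}_t-\mathbf{x}_{t-1}\|^2 .
\]
Where the original proof bounds $\|\mathbf{c}_{t-1}\|\,\|\mathbf{x}_t-\mathbf{x}_{t-1}\|$ by $\frac{L}{2}n\tau\delta^2$, I would keep one factor as the actual step length. So the target is $f(\mathbf{x}_{t-1})-f(\mathbf{x}_t)\ge C_1\Delta_t^2$ for a quantity $\Delta_t$ comparable to the step length over a window.

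\textbf{Step 2: summability.} Sum the inequality from $1$ to $T$. Since $f$ is bounded below (it is bounded by hypothesis), the left side telescopes to at most $f(\mathbf{x}_0)-\inf f<\infty$. With $C_1>0$ this gives $\sum_t\|\mathbf{x}_t-\mathbf{x}_{t-1}\|^2<\infty$, hence $\|\mathbf{x}_t-\mathbf{x}_{t-1}\|\to0$. Because $\mathbf{x}_{t+1}-\mathbf{x}_t=-\alpha\hat{\mathbf{g}}_t$, this also yields $\hat{\mathbf{g}}_t\to0$.

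\textbf{Step 3: conclusion.} From $\|\hat{\mathbf{g}}_t-\hat{\mathbf{g}}_{t-1}\|\le\|\hat{\mathbf{g}}_t\|+\|\hat{\mathbf{g}}_{t-1}\|$, the claim follows. A more structural route gives the same thing. Coordinates of $\hat{\mathbf{g}}$ change only when refreshed, so $\hat{\mathbf{g}}_t-\hat{\mathbf{g}}_{t-1}$ is supported on the $B$ refreshed coordinates. Each such entry differs by $|\tilde\nabla_i f(\mathbf{x}_t)-\tilde\nabla_i f(\mathbf{x}_{t-K})|\le L_\epsilon\|\mathbf{x}_t-\mathbf{x}_{t-K}\|\le L_\epsilon\sum_{j=t-K+1}^{t}\|\mathbf{x}_j-\mathbf{x}_{j-1}\|$. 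That bound also tends to zero by Step 2, since $K$ is fixed.

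\textbf{Main obstacle.} The difficulty is that Eq.~\eqref{eq:converge_1.1} is stated with $\delta$, a window maximum, rather than the single latest step. The staleness error $\mathbf{c}_{t-1}$ depends on the past $K$ steps, not only the current one. A per-step inequality of the form $f(\mathbf{x}_{t-1})-f(\mathbf{x}_t)\ge C_1\|\mathbf{x}_t-\mathbf{x}_{t-1}\|^2$ is therefore not immediate.

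I would first take the proof's convention at face value: $\delta$ bounds the step $\|\mathbf{x}_t-\mathbf{x}_{t-1}\|$, and Eq.~\eqref{eq:converge_1.2} holds as stated. Then $\delta_t^2\to0$ follows by telescoping whenever $\delta_t$ is read as the current step length.

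If that reading is too loose, I would fall back on a windowed Lyapunov argument. Write $\|\mathbf{c}_{t-1}\|\le \frac{L}{2}\sum_{j}\|\mathbf{x}_{j}-\mathbf{x}_{j-1}\|$ over the last $K$ steps, and apply Young's inequality to the cross term. Summing over $t$, each squared step is counted at most $K$ times. The net coefficient on $\sum_t\|\mathbf{x}_t-\mathbf{x}_{t-1}\|^2$ is then of the form $\frac1\alpha-\frac L2(1+n\tau)$, i.e.\ it stays positive under $C_1>0$, and summability follows exactly as in Step 2.
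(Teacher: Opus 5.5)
Your skeleton (descent inequality, telescoping, vanishing steps) is the paper's. Your ``structural route'' in Step~3 is also exactly how the paper finishes: it bounds the $B$ refreshed entries, $\|\hat{\mathbf{g}}_t-\hat{\mathbf{g}}_{t-1}\|\le BL\|\mathbf{x}_t-\mathbf{x}_{t-K}\|\le nL\delta$, and lets the window maximum $\delta\to0$. Your shortcut $\|\hat{\mathbf{g}}_t\|=\|\mathbf{x}_{t+1}-\mathbf{x}_t\|/\alpha\to0$ makes that bookkeeping unnecessary and proves the stronger fact that the estimates themselves vanish.

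More importantly, your ``main obstacle'' is a real defect in the paper's proof, which simply cites $f(\mathbf{x}_{k-1})-f(\mathbf{x}_k)\ge C_1\|\mathbf{x}_k-\mathbf{x}_{k-1}\|^2$ from \eqref{eq:converge_1.1}. Inside an upper bound, that chain replaces $-(\frac1\alpha-\frac L2)\|\mathbf{x}_t-\mathbf{x}_{t-1}\|^2$ by the smaller quantity $-(\frac1\alpha-\frac L2)\delta^2$ with $\delta\ge\|\mathbf{x}_t-\mathbf{x}_{t-1}\|$, which goes the wrong way. The per-step inequality is in fact false. Take $f(\mathbf{x})=\frac14(x_1+x_2)^2$ (so $L=1$), $n=2$, $B=1$, $\alpha=0.6$ (so $C_1=\frac16>0$), $\mathbf{x}_0=(1,0)$, and a zero-initialized buffer. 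Then $x_1+x_2$ takes the values $1,\,0.7,\,0.19,\,-0.077,\,-0.1109$ along $\mathbf{x}_0,\dots,\mathbf{x}_4$, so $f(\mathbf{x}_4)>f(\mathbf{x}_3)$. Your first option (taking the convention at face value) therefore fails, and the windowed Lyapunov argument has to be the proof.

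When you write it, fix the weights. After the first cycle, a coordinate of lag $\ell$ in $\hat{\mathbf{g}}_{t-1}$ has error at most $L\sum_{j=t-\ell}^{t-1}s_j$, where $s_j=\|\mathbf{x}_j-\mathbf{x}_{j-1}\|$. Hence, for $r=0$, $\|\mathbf{c}_{t-1}\|\le L\sum_{m=1}^{K-1}B(K-m)\,s_{t-m}$, whose total weight is $\frac L2 n\tau$; your $\frac L2\sum_j s_j$ undercounts. Young's inequality $s_{t-m}s_t\le\frac12(s_{t-m}^2+s_t^2)$ then charges $\frac L4 n\tau$ to $s_t^2$ and, after re-indexing, at most $\frac L4 n\tau$ to each earlier $s_j^2$. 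This gives
$f(\mathbf{x}_{K-1})-f(\mathbf{x}_T)\ge C_1\sum_{t=K}^{T}s_t^2-\frac{Ln\tau}{4}\sum_{j=1}^{K-1}s_j^2$
with exactly the paper's $C_1$. The result is summability rather than monotone decrease, which is all the theorem needs.

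One caveat you share with the paper: identifying $\mathbf{c}_{t-1}$ with pure staleness error uses Assumption~\ref{asp:acc_fd}, not only Assumption~\ref{asp:smooth}. With a fixed $\epsilon>0$, a finite-difference bias term linear in $s_t$ enters the descent inequality, and the telescoping no longer forces $s_t\to0$.
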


\begin{proof}
    The change in the gradient estimate at step $t$ is given by the difference in partial derivatives for the updated coordinates. Let $I_t$ be the set of $B$ coordinates updated at step $t$. The estimate update rule implies:
    \begin{equation}
        \hat{\mathbf{g}}_t - \hat{\mathbf{g}}_{t-1} = \sum_{i \in I_t} \left( \nabla_i^{\epsilon} f(\mathbf{x}_t) - \nabla_i^{\epsilon} f(\mathbf{x}_{t-\tau-1}) \right) \mathbf{e}_i.
    \end{equation}
    Using the Lipschitz continuity of the gradient and the fact that the indices in $I_t$ are disjoint:
    \begin{equation}
    \begin{aligned}
        \|\hat{\mathbf{g}}_t - \hat{\mathbf{g}}_{t-1}\| 
        &\leq \sum_{i \in I_t} \|\nabla_i^{\epsilon} f(\mathbf{x}_t) - \nabla_i^{\epsilon} f(\mathbf{x}_{t-\tau-1})\| \\
        &\leq \sum_{i \in I_t} L_{\epsilon}\|\mathbf{x}_t - \mathbf{x}_{t-\tau-1}\|\leq \sum_{i \in I_t} L \|\mathbf{x}_t - \mathbf{x}_{t-\tau-1}\| \\
        &= B L \|\mathbf{x}_t - \mathbf{x}_{t-\tau-1}\|.
    \end{aligned}
    \end{equation}
    The distance between inputs over the staleness window $\tau$ can be bounded by the sum of step sizes. Let $\delta = \delta_{t, \tau+1}$ as defined in Definition~\ref{def:local_coherence}. Then:
    \begin{equation}
        \|\mathbf{x}_t - \mathbf{x}_{t-\tau-1}\| \leq \sum_{k=0}^{\tau} \|\mathbf{x}_{t-k} - \mathbf{x}_{t-k-1}\| \leq (\tau + 1) \delta.
    \end{equation}
    Substituting this back and noting that $(\tau+1)B = \frac{n}{B} \cdot B = n$:
    \begin{equation}
    \label{eq:grad_diff_bound}
        \|\hat{\mathbf{g}}_t - \hat{\mathbf{g}}_{t-1}\| \leq B L (\tau+1) \delta = n L \delta.
    \end{equation}
    
    From the proof of Theorem~\ref{thm:convergence} (Eq.~\ref{eq:converge_1.1}), we established the sufficient decrease condition:
    \begin{equation}
        f(\mathbf{x}_{k-1}) - f(\mathbf{x}_k) \geq C_1 \|\mathbf{x}_k - \mathbf{x}_{k-1}\|^2.
    \end{equation}
    Summing this inequality from $k=1$ to $t$:
    \begin{equation}
        f(\mathbf{x}_0) - f(\mathbf{x}_t) \geq C_1 \sum_{k=1}^t \|\mathbf{x}_k - \mathbf{x}_{k-1}\|^2.
    \end{equation}
    Since $f$ is bounded below, the series on the RHS converges as $t \to \infty$. This implies that the term $\|\mathbf{x}_k - \mathbf{x}_{k-1}\| \to 0$. Consequently, the window maximum $\delta$ also converges to 0.
    
    Combining this with Eq.~\ref{eq:grad_diff_bound}, we conclude:
    \begin{equation}
        \lim_{t \to \infty} \|\hat{\mathbf{g}}_{t} - \hat{\mathbf{g}}_{t-1}\| \leq \lim_{t \to \infty} n L \delta = 0.
    \end{equation}
\end{proof}
\section{Pseudocode of CoCD}
\label{app:pseudocode}

We provide the pseudocode for Coherent Coordinate Descent (CoCD) below. The core innovation lies in the efficient management of the gradient buffer as a circular FIFO queue, allowing the algorithm to operate on complex, multi-dimensional parameter structures as if they were a contiguous flat vector.

Algorithm~\ref{alg:cocd} details the main optimization step, while Algorithm~\ref{alg:approx_grad} details the coordinate-wise gradient estimation using the virtualized flattening logic.

\noindent\hrule height 1pt
\captionof{algorithm}{Coherent Coordinate Descent (CoCD) - Optimizer Step}
\label{alg:cocd}
\noindent\hrule height 0.5pt
\begin{algorithmic}[1]
\STATE \textbf{Input:} Parameters $\Theta = \{\theta_1, \theta_2, \dots \}$, Loss function $\mathcal{L}$, Learning rate $\alpha$, Momentum $\gamma$
\STATE \textbf{Hyperparameters:} Compute Budget $B$, Memory Budget $m$, Perturbation $\epsilon$
\STATE \textbf{State (Persistent):} 
    \begin{itemize}
        \item Global Gradient Buffer $\mathcal{G} \in \mathbb{R}^m$ (initialized to 0)
        \item Pointers: \texttt{cur\_param\_idx}, \texttt{cur\_weight\_idx}, \texttt{cur\_grad\_idx} (initialized to 0)
        \item Descent Offsets: \texttt{grad\_offset}, \texttt{param\_offset}, \texttt{weight\_offset} (initialized to 0)
    \end{itemize}

\STATE \textbf{Function} \textsc{Step}($\Theta$):
    \STATE \quad \textcolor{gray}{// 1. Decay existing estimates (Momentum)}
    \STATE \quad $\mathcal{G} \leftarrow \gamma \cdot \mathcal{G}$
    
    \STATE \quad \textcolor{gray}{// 2. Refresh B coordinates in the circular buffer}
    \STATE \quad \textsc{ApproximateGradient}($\Theta, \mathcal{G}, B, \epsilon$)
    
    \STATE \quad \textcolor{gray}{// 3. Apply descent using the available gradients in buffer}
    \STATE \quad \textsc{Optimize}($\Theta, \mathcal{G}, \alpha$)
    
    \STATE \quad \textbf{return} $\Theta$
\end{algorithmic}
\noindent\hrule height 1pt

\vspace{2em}

\noindent\hrule height 1pt
\captionof{algorithm}{Efficient Gradient Estimation \& Virtual Flattening}
\label{alg:approx_grad}
\noindent\hrule height 0.5pt
\begin{algorithmic}[1]
\STATE \textbf{Function} \textsc{ApproximateGradient}($\Theta, \mathcal{G}, B, \epsilon$):
    \FOR{$k = 1$ \textbf{to} $B$}
        \STATE \textcolor{gray}{// Map pointers to specific parameter tensor}
        \STATE $\mathbf{W} \leftarrow \Theta[\texttt{cur\_param\_idx}]$
        \STATE $w \leftarrow \texttt{cur\_weight\_idx}$
        
        \STATE \textcolor{gray}{// Zeroth-Order Finite Difference}
        \STATE $v_{old} \leftarrow \mathbf{W}[w]$
        
        \STATE $\mathbf{W}[w] \leftarrow v_{old} + \epsilon$
        \STATE $L_+ \leftarrow \mathcal{L}(\Theta)$
        
        \STATE $\mathbf{W}[w] \leftarrow v_{old} - \epsilon$
        \STATE $L_- \leftarrow \mathcal{L}(\Theta)$
        
        \STATE $\mathbf{W}[w] \leftarrow v_{old}$ \quad \textcolor{gray}{// Restore parameter}
        
        \STATE \textcolor{gray}{// Store in circular buffer}
        \STATE $\mathcal{G}[\texttt{cur\_grad\_idx}] \leftarrow (L_+ - L_-) / (2\epsilon)$
        
        \STATE \textsc{UpdatePointers}()
    \ENDFOR

\STATE \textbf{Function} \textsc{UpdatePointers}():
    \STATE \texttt{cur\_grad\_idx} $\leftarrow$ (\texttt{cur\_grad\_idx} $+ 1) \pmod M$
    \STATE \texttt{cur\_weight\_idx} $\leftarrow$ \texttt{cur\_weight\_idx} $+ 1$
    
    \STATE \textcolor{gray}{// Check if we finished the current parameter tensor}
    \IF{\texttt{cur\_weight\_idx} $\geq$ numel($\Theta[\texttt{cur\_param\_idx}]$)}
        \STATE \texttt{cur\_weight\_idx} $\leftarrow 0$
        \STATE \texttt{cur\_param\_idx} $\leftarrow$ (\texttt{cur\_param\_idx} $+ 1) \pmod {|\Theta|}$
    \ENDIF

\STATE \textbf{Function} \textsc{Optimize}($\Theta, \mathcal{G}, \alpha$):
    \STATE $count \leftarrow 0$
    \STATE $g\_ptr \leftarrow \texttt{grad\_offset}$
    \STATE $p\_ptr \leftarrow \texttt{param\_offset}$
    \STATE $w\_ptr \leftarrow \texttt{weight\_offset}$

    \WHILE{$count < m$}
        \STATE $\mathbf{W} \leftarrow \Theta[p\_ptr]$
        \STATE $len \leftarrow \text{numel}(\mathbf{W}) - w\_ptr$
        \STATE $chunk \leftarrow \min(len, m - count)$
        
        \STATE \textcolor{gray}{// Apply gradients from buffer to parameters}
        \STATE \textcolor{gray}{// (Implementation uses flattened views to avoid copies)}
        \STATE $\mathbf{W}[w\_ptr : w\_ptr+chunk] \leftarrow \mathbf{W}[w\_ptr : w\_ptr+chunk] - \alpha \cdot \mathcal{G}[g\_ptr : g\_ptr+chunk]$
        
        \STATE \textcolor{gray}{// Advance local pointers}
        \STATE $g\_ptr \leftarrow (g\_ptr + chunk) \pmod m$
        \STATE $w\_ptr \leftarrow w\_ptr + chunk$
        \STATE $count \leftarrow count + chunk$
        
        \IF{$w\_ptr \geq \text{numel}(\mathbf{W})$}
            \STATE $w\_ptr \leftarrow 0$
            \STATE $p\_ptr \leftarrow (p\_ptr + 1) \pmod {|\Theta|}$
        \ENDIF
    \ENDWHILE
\end{algorithmic}
\noindent\hrule height 1pt
\section{Additional Evaluation: Scaling CoCD to Resnet-20 training}
\label{sec:scale_up}

\begin{figure}[h]
    \centering
    \includegraphics[width=0.65\linewidth]{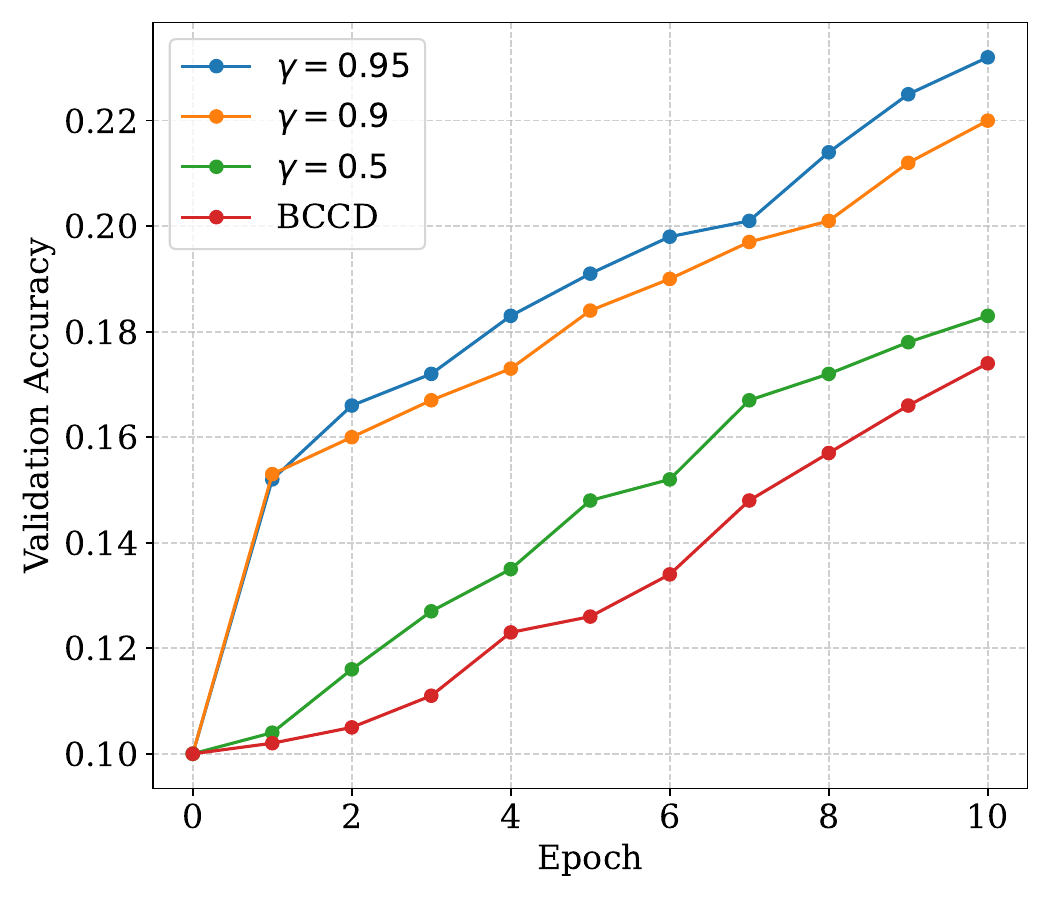} 
    \caption{Comparison among CoCD with different $\gamma$ values and BCCD on training a Resnet-20 for CIFAR-10 classification.}
    \label{fig:resnet}
\end{figure}

We train a standard ResNet-20 ($\approx270k$ parameters) on CIFAR-10. For all settings, we used a learning rate of $0.1$, batch size of $128$, weight decay of $10^{-4}$, budget $B=2048$, and $\epsilon=1.0$.

As shown in Fig.~\ref{fig:resnet}, within the 10-episode time limit, CoCD with $\gamma \in \{0.5, 0.9, 0.95\}$ outperforms the BCCD baseline. It successfully optimizes the network without staleness leading to instability even when $\gamma$ is high.
\end{document}